\definecolor{darkgreen}{rgb}{0.0, 0.5, 0.0}
\definecolor{blue}{rgb}{0.0, 0.47, 0.75}
\definecolor{dartmouthgreen}{rgb}{0.05, 0.5, 0.06}
\definecolor{drab}{rgb}{0.59, 0.44, 0.09}
\definecolor{navyblue}{rgb}{0.0, 0.0, 0.5}
\definecolor{darkgreen}{rgb}{0.0, 0.5, 0.0}
\definecolor{blue}{rgb}{0.0, 0.47, 0.75}
\definecolor{dartmouthgreen}{rgb}{0.05, 0.5, 0.06}
\definecolor{drab}{rgb}{0.59, 0.44, 0.09}
\definecolor{navyblue}{rgb}{0.0, 0.0, 0.5}
\newcommand{\vc}{\mathbf{c}}
\newcommand{\vy}{\mathbf{y}}
\newcommand{\cmark}{\ding{51}}%
\newcommand{\xmark}{\ding{55}}%
\titlespacing{\section}{0pt}{2ex}{1ex}
\titlespacing{\subsection}{0pt}{1ex}{0ex}
\titlespacing{\subsubsection}{0pt}{0.5ex}{0ex}
\newtheorem{theorem}{Theorem}
\newtheorem{corollary}{Corollary}[theorem]
\begin{document}

\twocolumn[

\aistatstitle{Just Mix Once: 
Worst-group Generalization by Group Interpolation}

\aistatsauthor{ Giorgio Giannone\footnotemark[1]
\And Serhii Havrylov 
\And  Jordan Massiah 
\And Emine Yilmaz 
\And Yunlong Jiao\footnotemark[1]}

\aistatsaddress{ 
DTU\footnotemark[2]
\And Amazon 
\And  Amazon 
\And Amazon, UCL 
\And Amazon}
]

\footnotetext[1]{Correspondence to:
\\ 
\texttt{gigi@dtu.dk}, \texttt{jyunlong@amazon.co.uk}.
}

\footnotetext[2]{
Work done when the first author was at Amazon Cambridge.}

\begin{abstract}
    Advances in deep learning theory have revealed how average generalization relies on superficial patterns in data.
	The consequences are brittle models with poor performance with shift in group distribution at test time.
	When group annotation is available, we can use robust optimization tools to tackle the problem. 
	However, identification and annotation are time-consuming, especially on large datasets.
	A recent line of work leverages self-supervision and oversampling to improve generalization on minority groups without group annotation.
	We propose to unify and generalize these approaches using a class-conditional variant of mixup tailored for worst-group generalization. 
	Our approach, Just Mix Once (JM1), interpolates samples during learning, augmenting the training distribution with a continuous mixture of groups.
	JM1 is domain agnostic and computationally efficient, can be used with any level of group annotation, and performs on par or better than the state-of-the-art on worst-group generalization. 
	Additionally, we provide a simple explanation of why JM1 works.
\end{abstract}

\section{Introduction}

Supervised learning aims to fit a model on a train set to maximize a global metric at test time~\citep{vapnik1999nature, james2013introduction, hastie2009elements}.
However, the optimization process can exploit spurious correlations between the target $y$ and superficial patterns $c$ in the data~\citep{geirhos2018imagenet, geirhos2018generalisation, hermann2020origins, recht2019imagenet}.
In this work, we are especially interested in the setting in which labels and patterns can be categorized into groups $g=(c, y)$ and we study generalization performance in the presence of shifts in group distribution at test time.

For example, in the case of \texttt{coloredMNIST} (Figure~\ref{figure:identification}), a model can use information about \texttt{color} (spurious correlation with the target class) instead of \texttt{shape} to classify a digit (almost all \texttt{6}s are blue at train time) during training. 
Then, in the presence of a shift in group distribution at test time~\citep{hendrycks2021many}, the model relies on the color bias with degradation in generalization performance (for example, \texttt{6}s are green and blue with equal proportion at test time).
Powerful deep learning models exploit easy superficial correlations~\citep{geirhos2018imagenet, geirhos2017comparing}, such as texture, color, background~\citep{hermann2020origins, hendrycks2019benchmarking} to improve average generalization. 
Solving this issue involves constraining the model~\citep{ilyas2019adversarial, tsipras2018robustness, schmidt2018adversarially}, for example leveraging explicit constraints~\citep{zemel2013learning, hardt2016equality, zafar2017fairness}, invariances~\citep{arjovsky2019invariant, creager2021environment}, and causal structure~\citep{oneto2020fairness}. 

Similarly, in the presence of minority groups in a dataset, the model will tend to ignore such groups, relying on frequent patterns in majority groups.
Recent work has shown how to tackle unbalanced subpopulation or groups in data using explicit supervision~\citep{sagawa2019distributionally, yao2022improving}, self-supervision~\citep{nam2020learning}, and oversampling~\citep{liu2021just}, improving generalization on minority groups.
However, these methods have limitations in terms of applicability: such methods can be applied only to specific settings, like full group annotation or absence of group annotation, but fail to handle hybrid scenarios, such as partial labeling, group clustering, and changes in the train scheme.

In this work, we focus on improving worst-group generalization in realistic scenarios with any level of group annotation, devising a group annotation-agnostic method. 
Our approach, Just Mix Once (JM1), builds on supervised~\citep{sagawa2019distributionally} and self-supervised approaches~\citep{liu2021just} to improve worst-group performance, generalizing and unifying such methods. 
JM1 leverages an augmentation scheme based on a class-conditional variant of mixup~\citep{zhang2017mixup, verma2019manifold, carratino2020mixup} to interpolate groups and improve worst-group generalization. 
JM1 is scalable, does not require undersampling or oversampling, and is flexible, easily adaptable to a variety of problem configurations (full, partial, absence of group annotation) with minimal modifications.

\paragraph{Contribution.}
Our contributions are the following:

\begin{itemize}
	\item We propose a simple, general mechanism, JM1, based on a class-conditional variant of mixup, to improve generalization on minority groups in the data.
	JM1 is group annotation agnostic, i.e. it can be employed with different levels of annotations, generalizing and unifying proposed methods to improve group generalization. 
	\item We propose a novel interpretation of why interpolating majority and minority groups is effective in improving worst-group generalization, and justify it theoretically in an ideal case and empirically in realistic scenarios. 
	\item We perform extensive experiments with different levels of group annotation and ablation for the mixup and class-conditional strategy.
	JM1 is successfully employed with 
	I) fine-grained annotation, II) coarse annotation, and III) only a small validation set annotated, demonstrating that our method outperforms or is on par with the SOTA on vision and language datasets for worst-group generalization.
\end{itemize}

\begin{figure}[thbp]
	\centering
	\includegraphics[width=.45\textwidth]{./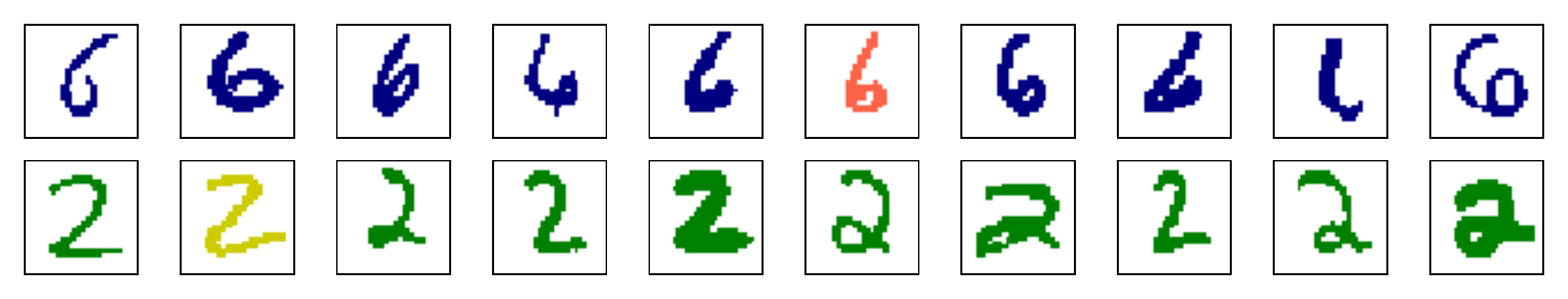}
	\quad
	\includegraphics[width=.45\textwidth]{./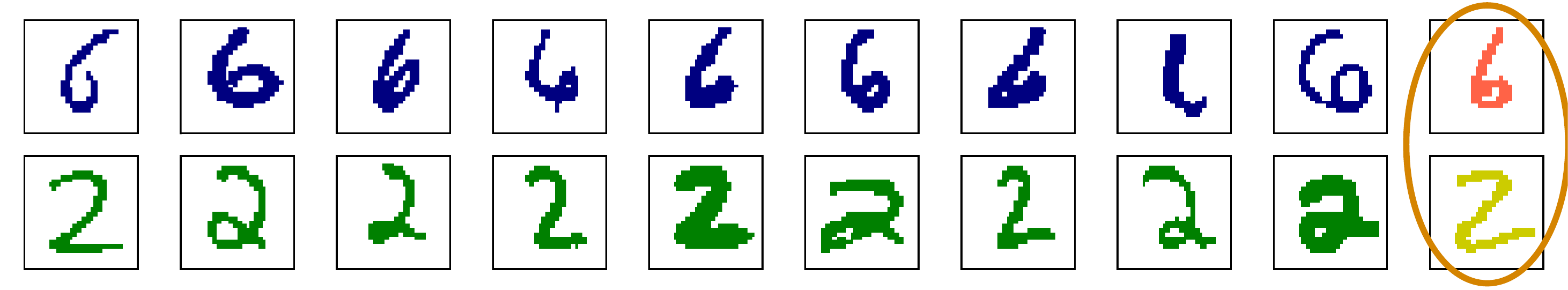}
	\caption{\textbf{Groups Identification Phase.} 
		Left: train set. 
		Right: train set partitioned.
		In the first phase, the training dynamics are exploited to split the data into two partitions.
		The assumption is that, among the samples identified as difficult (samples in the orange circle), there are typically samples from minority groups.
		The misclassification rate in the early stage of training is used as a clustering signal to estimate the group distribution: 
		patterns that are superficially frequent in the data (e.g., color, texture, background) are easy to classify and have a small loss, partitioning a \textit{majority} group;
		infrequent patterns (e.g., shape) are challenging to model and are misclassified during the early stage of training, partitioning a \textit{minority} group.
	}
\label{figure:identification}
\end{figure}

\begin{figure}[thbp]
	\centering
	\includegraphics[width=.45\textwidth]{./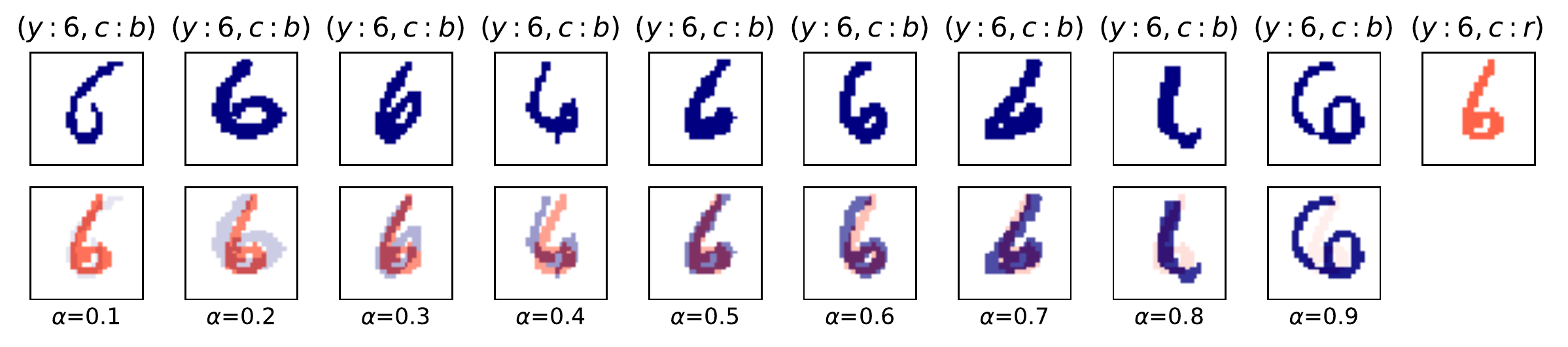}
	\quad
	\includegraphics[width=.45\textwidth]{./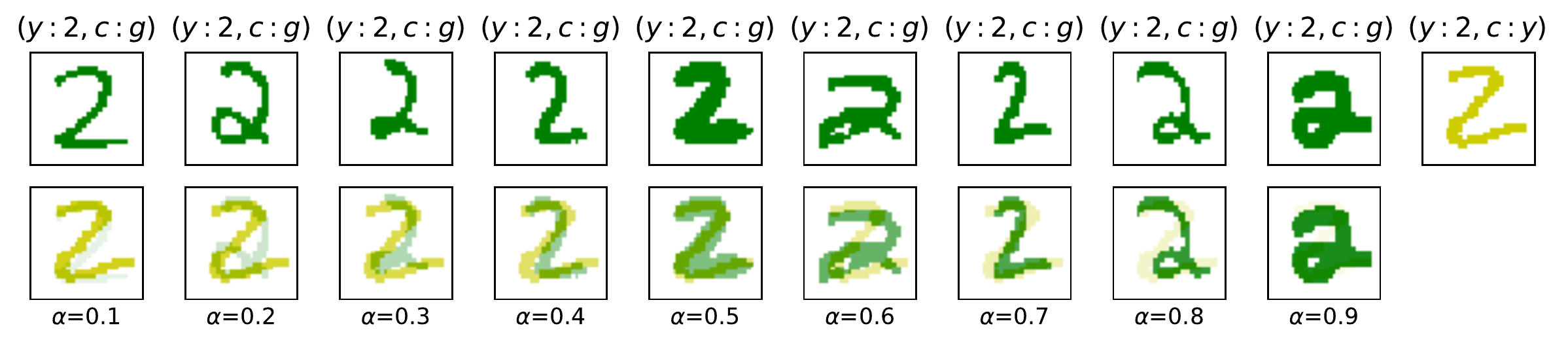}
	\caption{\textbf{Class-conditional Mixing Phase.} 
		After the identification phase, we exploit label information to augment the training data, using a MixUp-inspired strategy to augment samples from different partitions.
		We use the label information to select two samples from different partitions, i.e., $(h^i_g, y^i_g)$ and $(h^j_{\bar g}, y^j_{\bar g})$ where $y^i_g = y^j_{\bar g} = y, \, g \neq {\bar g}$.
		Then we create a mixed-up sample $(h_{mix}, y_{mix})$ s.t. $h_{mix} = \alpha~h^i_g + (1 - \alpha)~h^{j}_{\bar g},\, y_{mix} = y$.
		Note that $h$ can either be an input~\citep{zhang2017mixup} or a learned representation~\citep{verma2019manifold}.
		This simple mechanism gives us a principled and domain-agnostic way to augment samples marginalizing the group information in the data: by sampling $\alpha$, we augment the training data and build a continuous distribution of groups in the data.
		Therefore, the model cannot rely on frequent patterns because each sample has a "slightly different" group pattern.
	}
\label{figure:mixing}
\end{figure}

\section{Background}
Our work tackles the problem of improving worst-group generalization with different levels of group annotation available during training.
Here we present two main classes of methods to deal with groups in the data with and without group annotation on the train set.

Assuming that the training data $D=\{x^i, y^i\}^n_{i=1}$ are partitioned into (non-overlapping) groups $\mathcal{G}_1, \dots, \mathcal{G}_m$, through sub-populations represented by tuples $(y, c)$ of label $y$ and confounding factors $c$, we can define the following per-group and group-average loss:
\begin{align}
\begin{split}
\label{eq:group_loss}
    J(\theta; \mathcal{G}) :&= \dfrac{1}{|\mathcal{G}|} \sum_{(x_g, y_g) \in \mathcal{G}} l(x_g, y_g;\theta) \\
    J(\theta) :&= \dfrac{1}{m} \sum_{k=1}^m J(\theta; \mathcal{G}_k) .
\end{split}
\end{align}
Notice that in case $m=n$, each group contains a single data point, and the group-average loss collapses to the standard ERM loss $J_{\mathtt{ERM}}$.
Note that if we define a uniform per-group loss weight $p_k = 1/m, k=1,\dots,m$, we can rewrite the group-average loss as:
\begin{equation}
\label{eq:group_loss_proba}
J(\theta) = \sum_{k} p_k J(\theta; \mathcal{G}_k) = \mathbb{E}_{p(\mathcal{G})} J(\theta; \mathcal{G}) .
\end{equation}

Distributional Robust Optimization (DRO)~\citep{sagawa2019distributionally} aims to optimize:
\begin{equation}
J_{\mathtt{DRO}} = \max_{k} J(\theta, \mathcal{G}_k) ,
\label{eq:dro_loss}
\end{equation}
which corresponds to a \textit{pointwise} Dirac distribution $p_k = 1$ if $k = \arg\max_{k} J(\theta; \mathcal{G}_k)$ else 0 in Eq.~\ref{eq:group_loss_proba}.
This approach is different from the standard training routine, where the goal is optimizing the average error among groups.
If group information on the train set is absent, methods such as Just Train Twice (JTT)~\citep{liu2021just, nam2020learning} can be used.
JTT is a powerful approach that solves worst-group generalization in a simple two-stage approach: in the first phase, the goal is to \emph{partition} the data into two clusters: one with majority groups (frequent superficial patterns); and one with minority groups (uncommon patterns).
The assumption is that samples from minority patterns are difficult to model and frequently misclassified in the early-stage of training.
JTT uses the misclassified samples in the early stage of training as a signal to partition the data. 
Specifically, a supervised learner, parameterized by $\theta$, is trained on the data $\mathcal{D} = \{(x^i,y^i)\}_{i=1}^n$ using ERM loss (up to constant)
$J_{\texttt{ERM}}(\theta) := \sum_{(x,y) \in \mathcal{D}} l(x, y;\theta)$
with early stopping.
To avoid overfitting, a small validation set with group annotation is used to select the best identification epoch $T$ to create the appropriate partitions.
Then misclassified samples~\citep{nam2020learning} are saved in a buffer $\mathcal{B} := \{(x_b,y_b) \textrm{ s.t. } {\hat f_{T}}(x_b) \neq y_b\}$.

Once the partitions are identified, the same learner is trained for the second time on a \emph{re-weighted} version of the data, where samples in $\mathcal{B}$ are oversampled $\lambda \gg 1$ times.
The intuition is that, if the examples in the error set $\mathcal{B}$ come from challenging groups, such as those where the spurious correlation does not hold, then up-weighting them will lead to better worst-group performance~\citep{liu2021just}.
Specifically, the loss for phase II) can be written as (up to constant): 
$$
J_{\mathtt{JTT}}(\theta) := \lambda \sum_{(x_b,y_b) \in \mathcal{B}} l(x_b, y_b; \theta) + \sum_{(x_{\bar b},y_{\bar b}) \notin \mathcal{B}} l(x_{\bar b}, y_{\bar b}; \theta),
$$
where $\lambda$ is the up-sampling rate for samples in $\mathcal{B}$ identified from phase I).
A similar approach was proposed in~\citep{nam2020learning} with the difference that the groups are re-weighted after each epoch using the loss magnitude in phase one. This approach is more general, but in practice more difficult to train and scale.

\textbf{Limitations.} 
GroupDRO and JTT are effective methods but can be used only in specif scenarios: full-group annotation on the train set for the former, and group annotation on the validation set for the latter. 
These methods are challenging to adapt for hybrid scenarios where group annotation can be fine-grained for some classes (group annotation on each sample) and coarse for others (partition-based, cluster-based annotation). Such "sparse" and incomplete group annotation is the most common in practice for large, diverse datasets.
To deal with such issues, we propose a simple method to improve worst-group accuracy with different levels of annotation granularity available.
\section{Just Mix Once}
\label{sec:jm1}
We address the limitations of GroupDRO and JTT by proposing Just Mix Once (JM1).
Our goal is to improve the classification accuracy of minority groups, with or without explicit annotation on the groups at training time. In the following we will use mixing and interpolation interchangeably.
JM1 consists of two phases: phase I) discovers and clusters minority groups in the data, and phase II) uses class-conditional mixing to improve worst-group performance.
JM1 can be used with and without group annotation on the train set. 

\paragraph{I) (Optional) Groups Identification.}
In absence of group annotation on the train set, similarly to phase I) of JTT, we assume that we have annotated groups on a small validation set and resort to using a self-supervised signal~\citep{nam2020learning, arazo2019unsupervised} based on the misclassification rate (and the loss magnitude) in the early stage of training similarly to JTT.
Fig.~\ref{figure:identification} illustrates how relevant samples from minority groups are identified.
Note that self-supervised training of group identification is used only in absence of group annotation on the train set:
if group annotations are available on the train set, we always use such oracle groups to identify the majority and minority groups.

\paragraph{II) Class-conditional Interpolation.}
To generalize on minority groups at test time, a model should not rely on spurious correlations during training (e.g., texture or color) but on the signal of interest (e.g., shape).
Unlike the oversampling approach taken by JTT, we resort to a better augmentation (Fig.~\ref{figure:mixing}) to improve generalization for low-density groups.
Our mixing strategy is inspired by MixUp~\citep{verma2019manifold, zhang2017mixup, carratino2020mixup}, and we propose a novel class-conditional variant that achieves robust worst-group generalization.
Specifically, for any two samples $(h^i_g, h^j_{\bar g})$ in the input (or representation) space from the two partitions $g, {\bar g}$ (i.e., difficult/misclassified/minority group $g$ vs the other majority group ${\bar g}$) with the same class label $y^i_g = y^j_{\bar g} = y$, we mix them up using a convex interpolation:
\begin{equation}\label{eq:class_cond_mixing}
h_{mix} = \alpha~h^i_g + (1 - \alpha)~h^{j}_{\bar g} , \quad y_{mix} = y ,
\end{equation}
where $\alpha$ is the mixing parameter (details on how to choose $\alpha$ are deferred to Sec.~\ref{sec:experiments}).
Notably, we empirically show that naively implementing the standard MixUp without the proposed two-stage approach fails to generalize in terms of worst-group performance (Fig.~\ref{bar:oracle}).

Finally, JM1 uses a simple ERM loss over mixed data:
$$
J^{\mathtt{ERM}}_{\mathtt{JM1}} = \sum\nolimits_{h^i_g, g \in \mathcal{G}} \sum\nolimits_{h^{j}_{\bar g}, {\bar g}\in \bar{\mathcal{G}}} l(h_{mix}, y).   
$$

\subsection{How JM1 Works}
\label{sec:why-jm1-works}
In this subsection we explain why JM1, based on augmenting the group distribution by interpolating samples, works.

First, we discuss how our mixing strategy enables JM1 to build a "continuous" spectrum of groups in the data by sampling $\alpha$.
Using MixUp with $\alpha$ mixing rate, we generate new samples drawn from a continuous mixture (Eq.~\ref{eq:class_cond_mixing}), which has as limiting case the training distribution.
In this view, we can interpret each mixed-up sample as drawn from a "mixed-up group" $\mathcal{G}_{\alpha}$ parametrized by $\alpha$.
This means, assuming an oracle partitioning in phase I), JM1 mixes a majority and a minority group in the data at each iteration while generating a \textit{continuous} group distribution.
We denote by $J(\theta; \mathcal{G}_{\alpha})$ our per-group loss for an interpolated group $\mathcal{G}_{\alpha}$ with the mixing rate $\alpha$, and write the group-average loss marginalizing the group distribution:
\begin{align}
\begin{split}
\label{eq:group_loss_jm1}
	J_{\mathtt{mix}}(\theta) 
	: &= \mathbb{E}_{\alpha} \mathbb{E}_{p(\mathcal{G}; \alpha)} [J(\theta; \mathcal{G})] \\
	  &= \int_{\alpha} \int_{\mathcal{G}_{\alpha}} p(\alpha) p(\mathcal{G}; \alpha) J(\theta; \mathcal{G})~d\alpha~d\mathcal{G} .
\end{split}
\end{align}
Note that first sampling $\alpha$ and then applying MixUp with rate $\alpha$ (Eq.~\ref{eq:class_cond_mixing}) is equivalent to drawing samples directly from a mixed-up group $\mathcal{G}_{\alpha}$ drawn implicitly from $p(\mathcal{G}; \alpha)$.
Therefore, in practice, we can approximate Eq.~\ref{eq:group_loss_jm1} by a simple MC sampling process:
sample $\alpha$, apply MixUp with rate $\alpha$, compute the loss using the mixed-up samples; repeat for each minibatch in SGD updates.
This simple mechanism enables augmenting data by marginalizing group information; hence the training cannot rely on spurious group patterns because each sample belongs to a "slightly different" group.

\paragraph{Mixing vs Oversampling.}
\begin{figure}
\centering
	\begin{minipage}{.45\columnwidth}
		\centering
		\includegraphics[width=.9\columnwidth]{./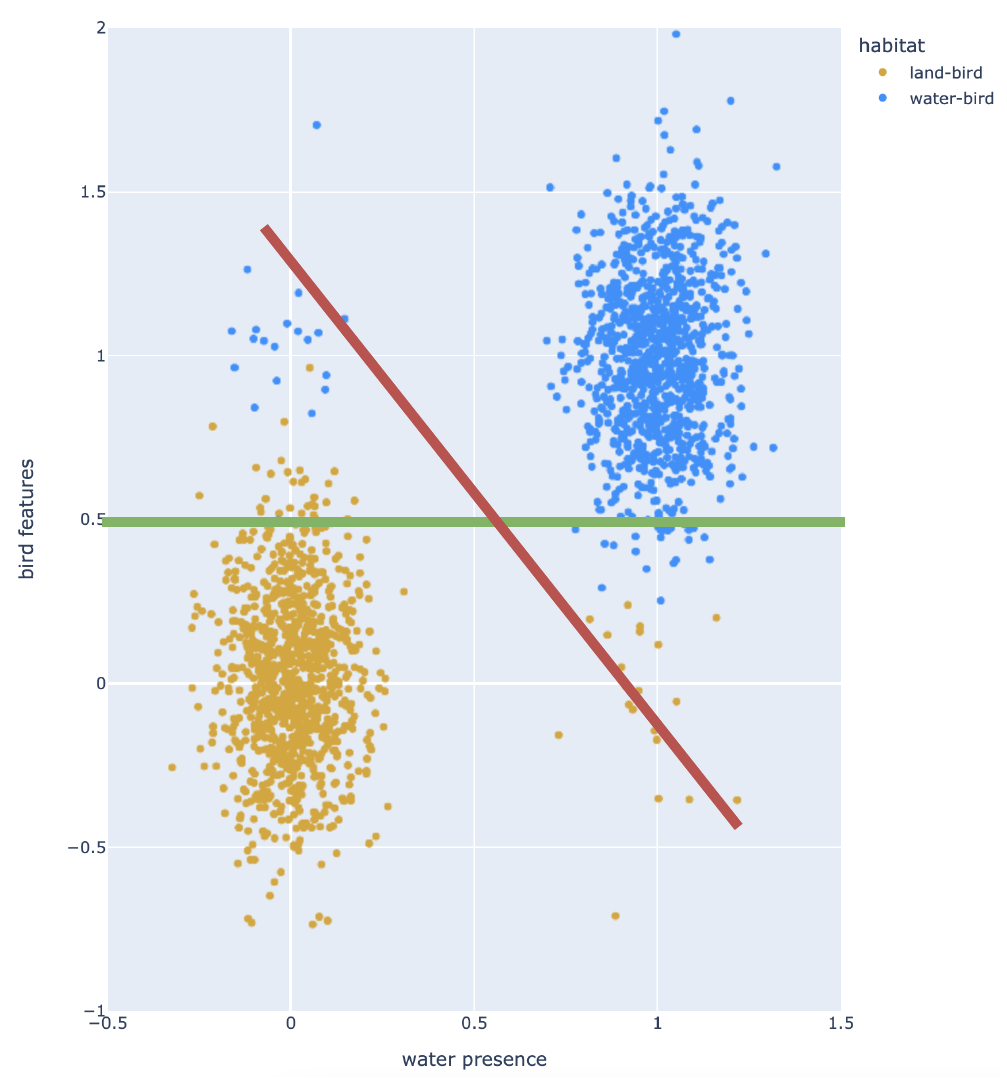}
		\caption{Train.}
		\label{fig:source-domain}
	\end{minipage}%
	\begin{minipage}{.45\columnwidth}
		\centering
		\includegraphics[width=.9\columnwidth]{./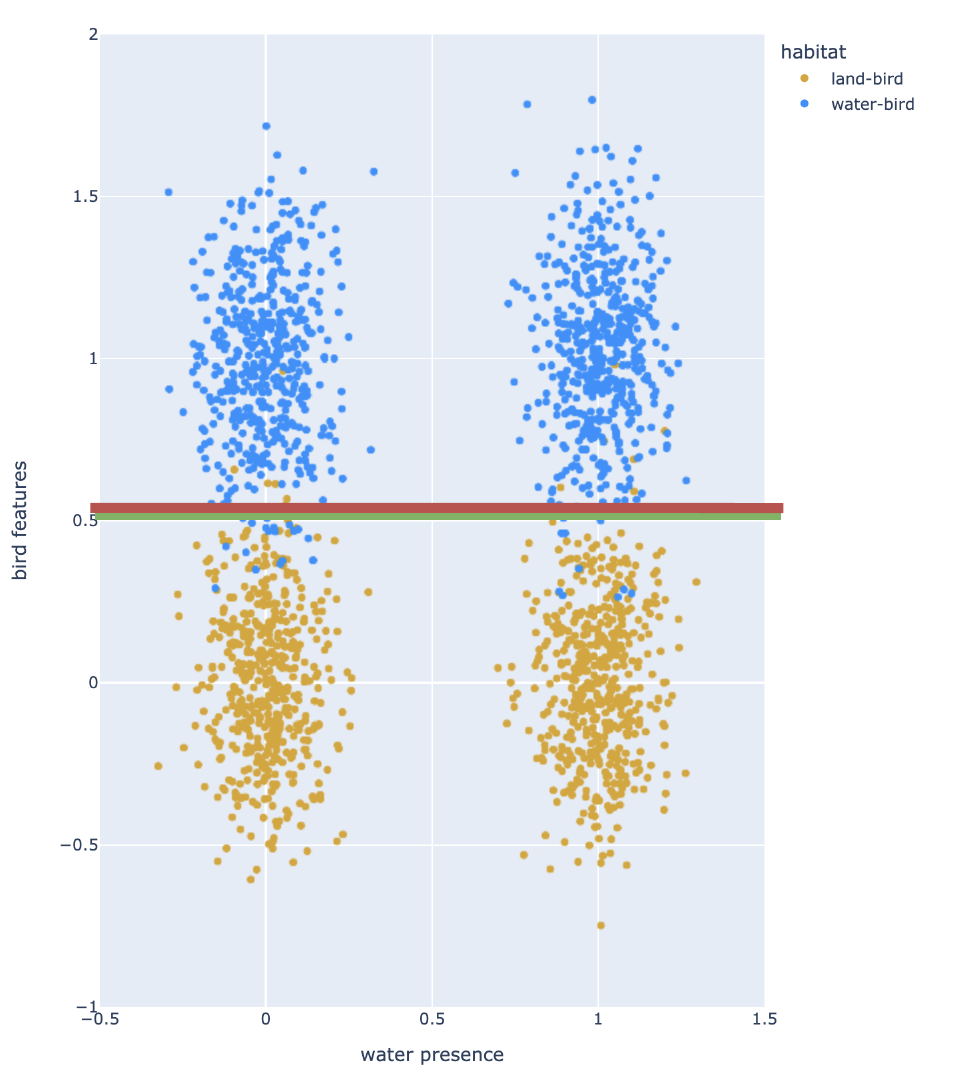}
		\caption{Test.}
		\label{fig:target-domain}
	\end{minipage}
\end{figure}

\begin{figure}
\centering
	\begin{minipage}{.45\columnwidth}
		\centering
		\includegraphics[width=.9\columnwidth]{./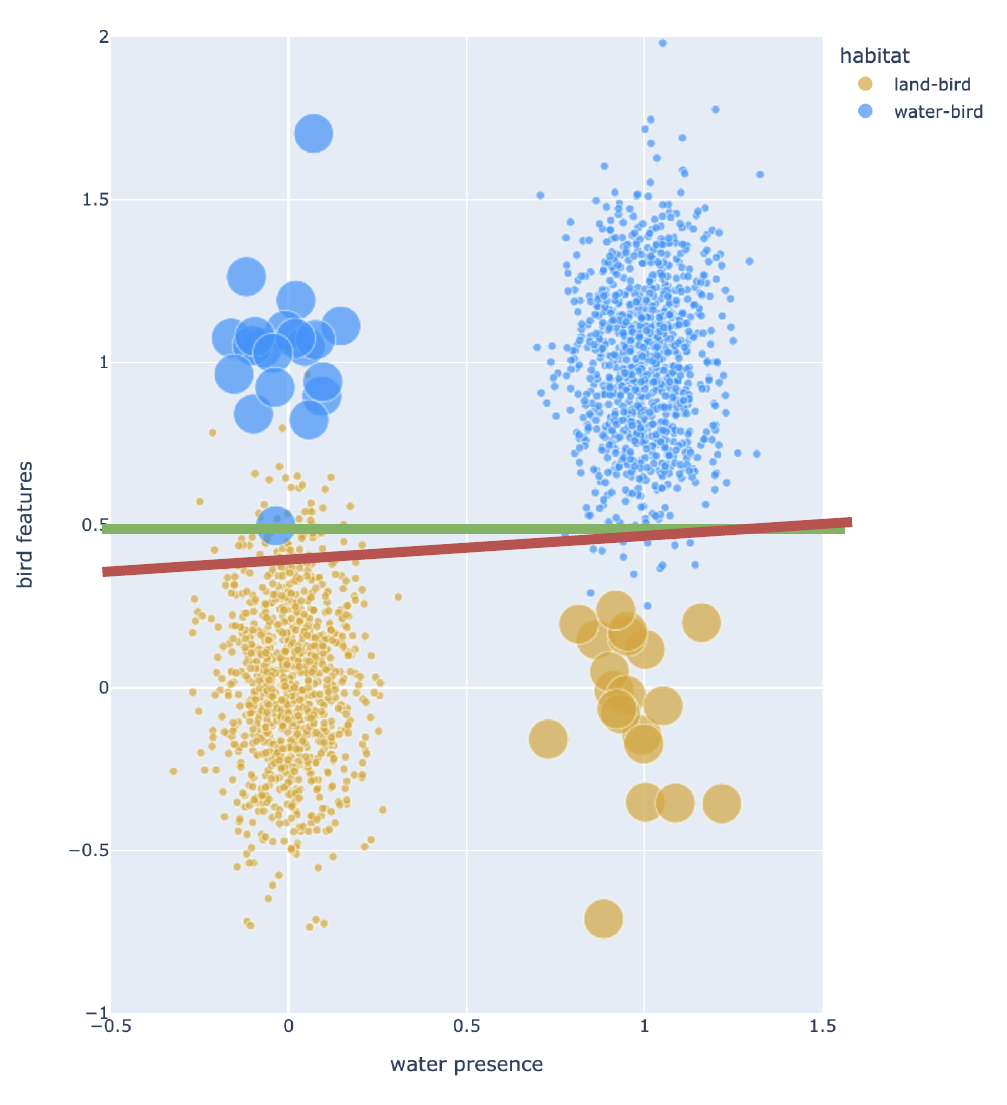}
		\caption{JTT.}
		\label{fig:jtt-oversample}
	\end{minipage}%
	\begin{minipage}{.45\columnwidth}
		\centering
		\includegraphics[width=.9\columnwidth]{./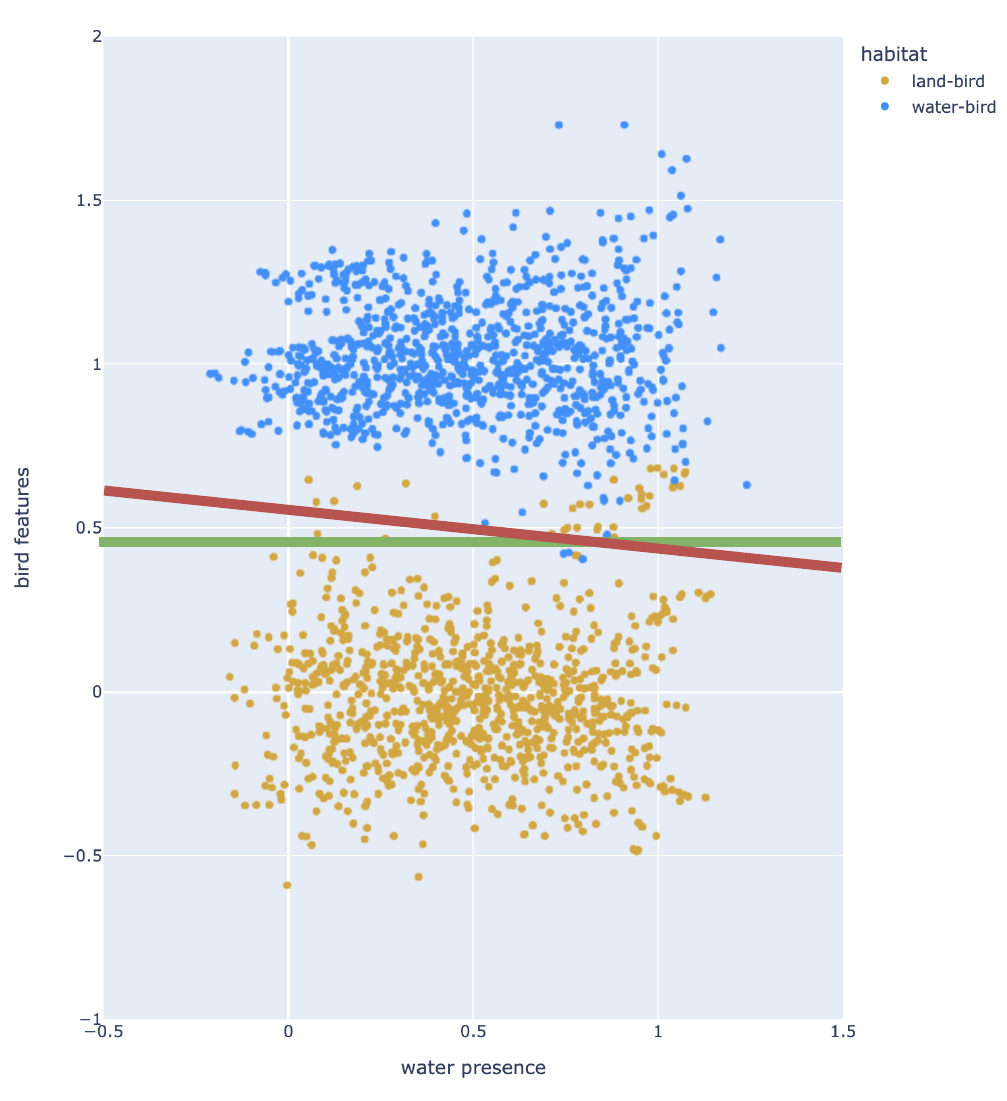}
		\caption{JM1.}
		\label{fig:jm1-mix}
	\end{minipage}
\end{figure}
Here, we discuss a toy visualization to build intuition about the mixing procedure versus a standard oversampling strategy.
Given two classes, \texttt{water-bird} (blue points) and \texttt{land-bird} (brown points), some of the groups are underrepresented in the train set (Fig.~\ref{fig:source-domain}).
In particular, \texttt{land-birds} with \texttt{water-background} and \texttt{water-birds} with \texttt{land-background} (x-axis) are the
underrepresented or minority groups.
On the test set, all groups are equally represented (Fig.~\ref{fig:target-domain}).
JTT oversamples the minority groups (Fig.~\ref{fig:jtt-oversample}) and trains on the augmented distribution. 
JM1 mixes minority and majority groups (Fig.~\ref{fig:jm1-mix}) and trains on the augmented distribution.
The green line is the optimal decision boundary for generalization on the test set. The red lines are the boundaries obtained using ERM (Fig.~\ref{fig:source-domain}), JTT (Fig.~\ref{fig:jtt-oversample}) and JM1 (Fig.~\ref{fig:jm1-mix}).  
We see that both methods generate a training distribution closer to the target in Fig.~\ref{fig:target-domain}.

\paragraph{Group Interpolation.}
Here we provide mathematical intuition of why JM1 is effective.
Consider that group is encoded as a 2d variable $g$ that is represented jointly by two 1d variables confounder $c \in \{0,1\}$ and label $y \in \{0,1\}$.
Suppose that a 2d sample representation is sampled from the conditional distribution $h|c, y \sim N((c, y), \sigma)$ where we denote by $g:=(c,y)$ and set $\sigma=1$ for simplicity.
Since minority groups $(c=0, y=1)$ and $(c=1, y=0)$ are only present in the source domain (Fig.~\ref{fig:source-domain}) but not in the target domain (Fig.~\ref{fig:target-domain}), the decision boundary if naively trained on the source domain will perform very poorly in the target domain.

For a pair of samples from a minority group $g$ (under-represented, for example $g=(0, 1)$) and a majority group ${\bar g}$ (over-represented, for example $\bar g = (1, 1)$) respectively, now that $h \sim N(g, \sigma)$ and ${\bar h} \sim N(\bar g, \sigma)$, we see that the mixed sample with rate $\alpha$ follows an augmented distribution:
\begin{align}
\begin{split}
h_{mix} &= \alpha~h + (1 - \alpha)~{\bar h} \\
h_{mix} &\sim N(\overbrace{(c_{mix}, y)}^{g_{mix}}, \overbrace{\alpha^2 + (1-\alpha)^2}^{\leq 1}),
\end{split}
\label{eq:mix_group}
\end{align}
where $c_{mix} \coloneqq \alpha \ c + (1 - \alpha) \ \bar c$. 
Given a representation space where content (associated with label information $y$) and style (associated with the group confounder $c$) factorize, we can write $h= \mu~f(c)$ and $\bar h= \mu~f(\bar c)$, where $\mu$ is a common content, because we are interpolating samples from different groups but with the same label $y_h = y_{\bar h}$, and $f$ is a mapping from group space to representation space.
If $f$ respects linearity (for example the expectation operator), interpolating samples is equivalent to interpolating groups. More formally:

\begin{theorem}[\textbf{Group Interpolation}]$ $

\begin{itemize}
\item Let \(f\) be a continuous function from group to representation space $f: \mathbb{G} \rightarrow \mathbb{R}^d$ that respects \textbf{linearity} and $g=(c, y)$ a group with label $y$ and factor $c$. 

\item Let $(x, \bar x) \in \mathbb{R}^D$ be samples from the \textbf{same class} $y_x = y_{\bar x}$ but different groups, that is, $x \in g=(c, y)$ and $\bar x \in \bar g=(\bar c, y)$.

\item Let $\mathbb{R}^d$ be a representation space such that content $\mu$ and factor $c$ can be \textbf{factorized} as $h= \mu(y_x)\ f(c)$ and $\bar h= \mu(y_{\bar x})\ f(\bar c)$ where $h$ and $\bar h$ are encoding for $x$ and $\bar x$.
\end{itemize}

$\rightarrow$ Then \underline{$h_{mix} = \mu \ f(c_{mix})$}.
\end{theorem}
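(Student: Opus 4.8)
The plan is to reduce the claim to a single short computation that threads together the two structural hypotheses, the shared label and the linearity of $f$. All the conceptual weight has been front-loaded into the assumptions, so the proof only has to assemble them in the correct order; there is no estimation, induction, or case analysis to carry out.

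First I would extract what the \emph{same-class} assumption buys us. By hypothesis the two encodings factorize as $h = \mu(y_x)\,f(c)$ and $\bar h = \mu(y_{\bar x})\,f(\bar c)$, and since $y_x = y_{\bar x} = y$ the content factors coincide; I would write $\mu \coloneqq \mu(y_x) = \mu(y_{\bar x})$. This is the step that isolates one common content $\mu$, and it is exactly the role played by the class-conditional constraint $y^i_g = y^j_{\bar g}$ imposed in Eq.~\eqref{eq:class_cond_mixing}: without a shared label the contents would differ and could not be pulled out of the interpolation.

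Next I would expand the definition of the mixed representation and factor out $\mu$, then push the convex combination inside $f$:
\begin{align}
\begin{split}
h_{mix} &= \alpha\,h + (1-\alpha)\,\bar h
        = \mu\bigl(\alpha\,f(c) + (1-\alpha)\,f(\bar c)\bigr) \\
        &= \mu\,f\bigl(\alpha c + (1-\alpha)\bar c\bigr)
        = \mu\,f(c_{mix}),
\end{split}
\end{align}
where the middle equality is precisely the linearity hypothesis on $f$ applied to the weights $\alpha$ and $1-\alpha$, and the last uses the definition $c_{mix} \coloneqq \alpha c + (1-\alpha)\bar c$. This closes the argument.

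The main obstacle is definitional rather than computational: one must pin down exactly what "$f$ respects linearity" means here. Since the mixing weights satisfy $\alpha + (1-\alpha) = 1$, the property actually used is preservation of \emph{affine} (equivalently convex) combinations, $f(\alpha c + (1-\alpha)\bar c) = \alpha f(c) + (1-\alpha)f(\bar c)$, rather than full homogeneity; I would state this explicitly as the working reading of the hypothesis, noting the expectation operator as the canonical example the paper has in mind. I would likewise flag the implicit requirement that the content factor $\mu$ depends only on the label and not on $c$, as that independence is what lets the factorization $h = \mu(y)\,f(c)$ genuinely separate content from style. Once those two points are granted, the conclusion follows immediately from the display above.
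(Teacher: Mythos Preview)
Your proposal is correct and mirrors the paper's own proof almost step for step: both expand $h_{mix}$, use the shared label to identify $\mu(y_x)=\mu(y_{\bar x})=\mu$, factor out $\mu$, and then invoke linearity of $f$ to collapse $\alpha f(c)+(1-\alpha)f(\bar c)$ into $f(c_{mix})$. Your additional remarks clarifying that only affine (convex) preservation is needed and that $\mu$ must depend solely on the label are sound refinements, but the underlying argument is the same constructive chain the paper presents.
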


\begin{proof}\renewcommand{\qedsymbol}{}
We follow a concise constructive approach:
\begin{align}
\begin{split}
\underline{h_{mix}} : &= 
\alpha~h + (1 - \alpha)~\bar h \\
& = \overbrace{\alpha~\mu(y_x) ~f(c) + (1 - \alpha)~\mu(y_{\bar x}) ~f(\bar c)}^{\texttt{factorization}} \\
& = \overbrace{\alpha~\mu ~f(c) + (1 - \alpha)~\mu ~f(\bar c)}^{\texttt{class-conditional}} \\
& = \mu~(\alpha~f(c) + (1 - \alpha)~f(\bar c)) \\
&= \underbrace{\mu~f(\alpha~ c + (1 - \alpha)~\bar c)}_{\texttt{linearity}} \\
: &= \underline{\mu ~f(c_{mix})},
\end{split}
\end{align}
where $\mu(y_x) = \mu(y_{\bar x}) \coloneqq \mu(y) = \mu$ because $y_x = y_{\bar x} \coloneqq y$.
\end{proof}

\begin{corollary}
Mixing class-conditional samples from groups $g$ and $\bar g$ with rate $\alpha$ is equivalent to drawing samples from group $g_{mix} = (c_{mix}, y) = (\alpha \ c \ + \ (1-\alpha) \ \bar c, y)$ as presented in Eq.~\ref{eq:mix_group}.
\end{corollary}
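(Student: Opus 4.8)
The plan is to obtain the Corollary as a direct specialization of the Group Interpolation Theorem to the explicit Gaussian generative model of Eq.~\ref{eq:mix_group}, supplemented by one elementary distributional computation. The link that makes the Theorem applicable to the \emph{random} samples $h, \bar h$ (rather than to deterministic encodings) is the suggestion in the text that $f$ may be taken to be the expectation operator: under the model $h \mid (c,y) \sim N((c,y),1)$ the mean representation of a group is exactly its descriptor, $\E[h \mid g] = g = (c,y)$, so interpolating samples and interpolating group descriptors coincide at the level of means.

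First I would fix a minority sample $h \sim N(g,1)$ with $g=(c,y)$ and a majority sample $\bar h \sim N(\bar g,1)$ with $\bar g=(\bar c,y)$, drawn class-conditionally so that they share the label $y$ (hence the common content $\mu(y)=\mu$). Applying the Theorem to their factorized encodings gives the interpolated center $\mu\,(\alpha f(c) + (1-\alpha) f(\bar c)) = \mu f(\alpha c + (1-\alpha)\bar c) = \mu f(c_{mix})$ with $c_{mix} = \alpha c + (1-\alpha)\bar c$, so the mixed representation is the encoding of the group $g_{mix} = (c_{mix},y)$. This is the abstract, representation-level half of the argument.

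Next I would make the statement distributional to match Eq.~\ref{eq:mix_group}. Treating $h$ and $\bar h$ as independent Gaussians, the convex combination $h_{mix} = \alpha h + (1-\alpha)\bar h$ is again Gaussian with mean $\alpha g + (1-\alpha)\bar g$ and variance $\alpha^2 + (1-\alpha)^2$. Computing the mean coordinatewise, the factor coordinate is $\alpha c + (1-\alpha)\bar c = c_{mix}$ while the label coordinate collapses to $\alpha y + (1-\alpha)y = y$, so the mean is exactly $(c_{mix},y)=g_{mix}$. This recovers Eq.~\ref{eq:mix_group} and identifies $h_{mix}$ as a sample whose group descriptor is $g_{mix}$, which is precisely the asserted equivalence between class-conditional mixing and sampling from the interpolated group.

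The main obstacle is reconciling the variances: a genuine draw from $g_{mix}$ under the model would have variance $1$, whereas $h_{mix}$ has variance $\alpha^2+(1-\alpha)^2 \le 1$, with equality only at $\alpha\in\{0,1\}$. I would therefore argue that the equivalence is to be read at the level of the group descriptor, i.e.\ the mean $(c_{mix},y)$ that defines group membership, and emphasize that the label coordinate is preserved \emph{exactly} because the mix is class-conditional ($y_x=y_{\bar x}$). The mild contraction of the factor variance only concentrates the interpolated group rather than changing its identity, so the equivalence stated in Eq.~\ref{eq:mix_group} holds as claimed.
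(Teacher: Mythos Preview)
Your argument is correct and lands at the same conclusion as the paper, but the paper's own justification is much terser: immediately after the Corollary it simply writes ``If $f=I$, we obtain Eq.~\ref{eq:mix_group}.'' That is the entire proof --- specialize the Group Interpolation Theorem by taking $f$ to be the identity map, so that $h_{mix} = \mu f(c_{mix})$ reads off directly as the mean $(c_{mix},y)$ of the Gaussian in Eq.~\ref{eq:mix_group}.

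The difference with your route is twofold. First, you instantiate $f$ as the expectation operator rather than the identity; in this particular model $\E[h\mid g]=g$, so the two choices coincide and nothing is lost, but the paper's $f=I$ is the more direct reading. Second, you redo the Gaussian convolution to recover the full law of $h_{mix}$ and then explicitly confront the variance mismatch ($\alpha^2+(1-\alpha)^2\le 1$ versus unit variance for a genuine draw from $g_{mix}$). The paper sidesteps this entirely: Eq.~\ref{eq:mix_group} already displays the contracted variance, and the Corollary is asserted at the level of the group descriptor $(c_{mix},y)$ without claiming equality of dispersions. Your added paragraph on this point is a fair and honest caveat that the paper leaves implicit, but it is not needed to establish the Corollary as stated.
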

If $f=I$, we obtain Eq.~\ref{eq:mix_group}. 
We can see that, given a continuous distribution for $\alpha$, $g_{mix}$ is a sample from a continuous distribution that augments the training groups by convex combination.
It is easy to see that a group sampled from the mixing distribution conditional on the same class $y$ (Fig.~\ref{fig:jm1-mix}) will be closer to the target than the source domain, achieving a similar effect of the oversampling approach taken by JTT (Fig.~\ref{fig:jtt-oversample}).
It is also clear that without the class-conditional constraint, mixing will result in an augmented source distribution whose decision boundary will be inconsistent with the target distribution.
A similar idea is visualized in Fig.~\ref{fig:normal_train_test_distro} and Fig.~\ref{fig:augmented_train_test_distro} where we assume that all samples for a group are concentrated in a small region around a normal distribution.

\subsection{GroupJM1}

Note that JM1 only needs access to a coarse group annotation on the train set:
it only requires the oracle majority and minority group assignment for each sample, or infers it based on self-supervised group identification (Phase I).

In the presence of fine-grained group annotation where we have per-sample group labels on train (and validation set), we present a variant of JM1, called GroupJM1, to utilize all available information.
This variant is motivated by GroupDRO.
\citep{sagawa2019distributionally} proposed to approximate the non-smooth DRO loss in Eq.~\ref{eq:dro_loss} by a smooth loss:
$J_{\mathtt{GroupDRO}} = \sum\nolimits^N_{i=1} \sup_g \sum\nolimits^G_{g=1} q_g~l(x_g^i, y_g^i),$
where $q_g$ is a per-group reweighting for the sample loss $l$; 
$q$ is learned iteratively $q_g \leftarrow q_g\exp(l(x_g^i, y_g^i))$ and exponentially increases the importance of samples with large loss, which can quickly converge to supreme loss in the limiting case.

For GroupJM1 we follow a similar procedure: for each batch we find the worst performing group based on the group loss:
\begin{equation*}
l_g = 1/n_g~\sum^{n_g}_{i=1} l(h_g^i, y_g^i), \quad \forall g,
\end{equation*}
and select the worst-group through $wg = \arg\max_g \{l_g\}^G_{g=1}$.
We then mix the samples from the worst performing group with samples from other groups conditional on the same class label, that is, $h_{mix} = \alpha~h_g^i + (1 - \alpha)~h_{wg}^j$. 
Finally, GroupJM1 uses the smooth approximate of the DRO loss with a minor adjustment to assign the group to each mixed sample:
$$
J^{\mathtt{DRO}}_{\mathtt{GroupJM1}} = \sum\nolimits_{h^i_g, g \in \mathcal{G}} \sum\nolimits_{h^j_{wg}, {wg}\in \mathcal{WG}} ~q_{mix}~ l(h_{mix}, y),
$$
where $q_{mix} = q_g$ if $\alpha > 0.5$ or $q_{mix} = q_{wg}$ if $\alpha < 0.5$.
In words, we use the class-conditional mixed data of a pair of samples from a minority (i.e. worst performing) and a majority (i.e. all other) group, but re-weight per-group loss based on the proximity of the mixed sample to the original samples.

\section{Experiments}
\label{sec:experiments}
Our focus for all experiments is to improve the worst-group performance for classification problems in the vision and language domains with or without group annotation at train time and with different levels of annotation granularity.
We mainly compare JM1 to GroupDRO and JTT given the similarity with such methods.

We report experiments on the benchmark proposed in~\citep{sagawa2019distributionally}. The benchmark consists of two vision and one language dataset (Fig.~\ref{fig:datasets}).
In these datasets, the model can easily exploit superficial, majority group patterns (background in CUB, sex in CelebA, negation in MultiNLI) to solve the classification tasks at train time. 
When tested on a set with a different group distribution (same groups but represented in significantly different proportions), the model performance drops, showing a lack of generalization capacity.

\paragraph{Settings.}
\begin{figure}
	\centering
	\includegraphics[width=0.45\textwidth]{./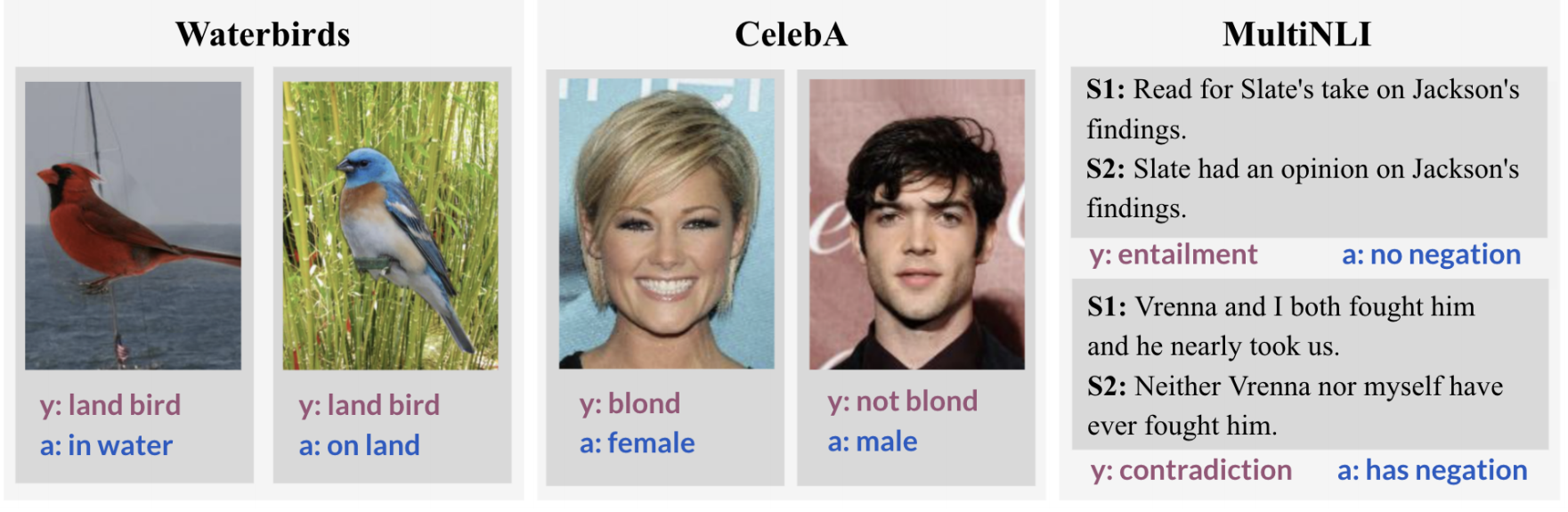}
	\caption{Figure adapted from~\citep{liu2021just}. Examples from the benchmarks. Label information $y$ is spuriously correlated with confounding factors $c$. Zoom for best visualization.
	}
	\label{fig:datasets}
\end{figure}
We initially found that mixing groups by sampling $\texttt{Beta}(1, 1)$\footnote{Samples from $\texttt{Beta}(1, 1)$ are uniform samples from $\texttt{U}(0, 1)$.} uniformly, even though improving worst-group accuracy over ERM, underperforms compared to JTT.
Therefore, we use a mixing strategy with a slight emphasis on the region closer to the minority groups and propose a coupled strategy:
we sample $\texttt{Beta}(1, 1)$ for half of the epochs, and $\texttt{Beta}(2, 5)$ for the other half. 
This simple heuristic ensures that the full mixing domain is spanned, with a focus on the region closer to the misclassified samples in phase I). 
We used this coupled sampling strategy for CUB and CelebA.
For MultiNLI, we sample $\alpha$ only from the uniform prior because the mixing phase runs for a very small number of epochs.
More details of JM1 training are described in Appendix~\ref{appendix:details}.

It is worth noting that: I) like JTT, JM1 is domain-agnostic and can be applied to vision and language datasets with minimal architectural changes; II) unlike JTT's oversampling approach, JM1 introduces an interpolation operation between samples.
Note that we can apply JM1 to mix samples at different levels of abstraction: either the input~\citep{zhang2017mixup} or the representation space~\citep{verma2019manifold}.
As the comparison of different mixup levels is not the focus of our study, we refer the readers to Appendix~\ref{appendix:experiments} for additional ablations.

\paragraph{Datasets.}
\underline{\texttt{Waterbirds (CUB)}}. 
The task is to classify land birds $\vy_l$ and water birds $\vy_w$.
The label information is correlated with a confounding factor: land background $\vc_l$ and water background $\vc_w$.
The majority groups are (land bird, land background) - $(\vy_l, \vc_l)$ and (water bird, water background) - $(\vy_w, \vc_w)$. 
The minority groups are (land bird, water background) - $(\vy_l, \vc_w)$ and (water bird, land background) - $(\vy_w, \vc_l)$. 
During training, minority groups are present in $5 \%$ of the data. 
At test time the groups are evenly distributed.

\underline{\texttt{CelebA}}.
The task is to classify blond person $\vy_b$ and not blond person $\vy_{nb}$.
The label information is correlated with a confounding factor: male $\vc_m$ and female $\vc_f$.
The majority groups are (not blond, male) - $(\vy_{nb}, \vc_m)$ and (blond, female) - $(\vy_b, \vc_f)$. The minority groups are (not blond, female) - $(\vy_{nb}, \vc_f)$ and (blond, male) - $(\vy_b, \vc_m)$. 

\underline{\texttt{MultiNLI}}.
The task is to classify an inference procedure as entailment $\vy_e$, contradiction $\vy_c$ and neutral $\vy_n$.
The label information is correlated with a confounding factor: the presence of a negation $\vc_{neg}$ and the absence of a negation $\vc_{nneg}$ in the reasoning.

\subsection{Worst-Group Generalization} 
To evaluate the generalization capacity of our approach, we consider 3 different scenarios in terms of group annotation. 

1. \underline{Fine-grained} group annotation on the train set.

2. \underline{Coarse} group annotation on the train set.

3. \underline{Validation set} group annotated.

\paragraph{Fine-grained group annotation on the train set.}
\begin{table}
\small
	\begin{center}
		\caption{\textbf{Fine-grained group annotation}.
		Experiments using GroupDRO and GroupJM1 variants. 
			We assume fine-grained group annotation on the train set. 
			We find the worst performing group in each mini batch and use such information to improve JM1. We samples $\alpha \sim \texttt{U}(0, 1)$. GroupDRO results from~\citep{sagawa2019distributionally} and LISA results from~\citep{yao2022improving}.
			LISA and JM1 use a ERM loss as training objective. GroupDRO and GroupJM1 use the DRO loss.
			}
			\setlength\tabcolsep{4.0pt}
		\begin{tabular}{l c c c c c c} 
			\toprule
			& \multicolumn{2}{c}{CUB} & \multicolumn{2}{c}{CelebA} & \multicolumn{2}{c}{MultiNLI} \\ %
			\vspace{2pt}
			& avg & \underline{worst}  & avg & \underline{worst} & avg & \underline{worst}   \\ %
			LISA     & 91.8   & 89.2 & 92.4 & 89.3 & -& - \\ %
			GroupDRO &93.5& \textbf{91.4} &92.9&88.9 &81.4&77.7 \\ %
			\textbf{JM1} (ours)      &90.9& 90.5& 92.0&86.1&82.1&\textbf{81.5} \\ %
			\textbf{GroupJM1} (ours) &93.3& \textbf{91.3}&93.0&\textbf{90.0}  &81.9& 79.3 \\ %
			\bottomrule
		\end{tabular}
	\label{table:group_jm1}
	\end{center}
\end{table}
In the presence of multiple groups in the data, we have access to fine-grained group annotation on the train set. Consequently, group-aware methods can be employed.
To test JM1 using oracle groups (i.e. without the Group Identification phase I) and its variant GroupJM1 (end Sec.\ref{sec:jm1}), which exploit known group annotation during training, we compare them with GroupDRO~\citep{sagawa2019distributionally} and LISA~\citep{yao2022improving}.
Note that LISA exploits a mixture of intra-label and intra-domain mixup with fine-grained multi-class group annotation on the train set, while JM1 exploits class-conditional mixup using a majority vs minority binary group assignment.
GroupDRO finds the worst-performing group in each batch, computing the per-group loss and reweighting the training loss using ad-hoc hyperparameters to adjust for minority groups.
As shown in Table~\ref{table:group_jm1}, GroupJM1 is very competitive in this scenario. Particularly in the MultiNLI data set, JM1 is the best performing method even using a simple ERM on the mixed sample, corroborating the class-conditional mixing procedure as a general approach to improve worst-group accuracy. 

\paragraph{Coarse group annotation on the train set.}
\begin{table}
\small
		\begin{center}
			\caption{\textbf{Coarse group annotation}. Results on CUB and CelebA datasets with coarse group annotation, where we use oracle majority and minority group annotations (instead of trained groups from Phase I).
			JM1 with class-conditional mixing generalizes better than oversampling-based JTT.}
		\begin{tabular}{l c c c c} 
			\toprule
			& \multicolumn{2}{c}{CUB} & \multicolumn{2}{c}{CelebA} \\
			\vspace{2pt}
			& avg & \underline{worst} & avg & \underline{worst} \\
			JTT (w/o I)  & 96.7         & 75.9    &93.4 &57.2\\
			\textbf{JM1} (w/o I) & 95.0         &\textbf{86.1} &92.3 &\textbf{85.6}\\
			\bottomrule
		\end{tabular}
		\label{table:oracle}
	\end{center}
\end{table}
The second set of experiments compares JM1 and JTT in a different scenario: we assume that we have access to the ground-truth minority group assignment on the train set.
Note that \emph{this does not equate} to the fine-grained per-sample group annotation: 
we only need a binary mask to know which sample has a group-conflicting confounder against its label.
Now we can use such an oracle majority vs. minority groups in place of the identified set from phase I).
As noted in Appendix C.2 in~\citep{liu2021just}, the performance of JTT drops if not using identified error sets from phase I), since JTT can excessively oversample the minority group.
Given its inner working of mixing groups rather than oversampling, we expect JM1 to outperform JTT in this oracle configuration.
Table~\ref{table:oracle} validates our intuition.
JM1 outperforms JTT (oracle) on both datasets.

\paragraph{Group annotation on the validation set.}

\begin{table}
\centering
	\caption{\textbf{Validation-set per-group accuracy}. 
	Per-group accuracy on \underline{CUB} dataset.
	    The goal is to classify land birds $\vy_l$ vs water birds $\vy_w$ in the presence of a confounders: land background $\vc_l$ vs water background $\vc_w$.
		JM1 is competitive with JTT improves accuracy on both minority groups $(\vy_l, \vc_w)$ and $(\vy_w, \vc_l)$ and competitive on the other two majority groups.
		We select the best run among 5 in terms of worst-group accuracy.}
		\setlength\tabcolsep{4.0pt}
		\small
	\begin{tabular}{l c c c c c c } 
		\toprule
		&\multicolumn{4}{c}{CUB} & &\\ 
		\vspace{2pt}
		& ($\vy_l$, $\vc_l$) & ($\vy_l$, $\vc_w$) &  ($\vy_w$, $\vc_l$) & ($\vy_w$, $\vc_w$) & avg & \underline{worst} \\
		JTT   & 94.3 & 86.7 & 87.5 & 91.6  & 93.3& 86.7\\
		\textbf{JM1} & 94.2 & 88.5 &88.9& 90.5 & 93.1 & \textbf{88.5}\\
		\bottomrule
	\end{tabular}
	\label{table:cub}
\end{table}
\begin{table}
\centering
		\caption{\textbf{Validation-set per-group accuracy}. 
		Per-group accuracy on \underline{CelebA} dataset. 
		The goal is to classify not blond $\vy_{nb}$ vs blond $\vy_b$ persons in the presence of a confounder: male $\vc_m$ vs female $\vc_f$.
		We select the best run among 5 in terms of worst-group accuracy.}
		\setlength\tabcolsep{4.0pt}
		\small
		\begin{tabular}{l c c c c c c } 
			\toprule
			&\multicolumn{4}{c}{CelebA} & &\\ 
			\vspace{2pt}
			&  ($\vy_{nb}$, $\vc_m$) & ($\vy_{nb}$, $\vc_f$) & ($\vy_{b}$, $\vc_m$) &  ($\vy_b$, $\vc_f$) & avg & \underline{worst} \\
			JTT   & 90.9 & 87.9 & 82.8 & 89.0 & 89.1&82.8 \\
			\textbf{JM1} & 88.8&87.7&84.4&86.6 &87.9&\textbf{84.4} \\
			\bottomrule
		\end{tabular}
	\label{table:celeba}
\end{table}
The last set of experiments evaluate JM1 without group annotation on the train set but with an annotated small validation set. 
This is the configuration proposed in JTT.
The models have access to a validation set with group annotation.
The main goal of JM1 is to improve the classification accuracy of minority groups by mixing samples from different groups with the same class.
Table~\ref{table:cub_sota} compares our approach with JTT. We select the best identification strategy for JTT and then use these samples for the second phase. 
As for JTT, we use an annotated validation set to select the best model.
We see that our approach is competitive with SOTA in improving the worst-group performance in all three datasets with comparable average accuracy.

Our approach is scalable, involving an additional forward pass and a cheap interpolation between samples, adding minimal computation overhead; and domain agnostic because it can be applied to vision and language datasets.
With JM1 we remove the need for oversampling and $\lambda$. %
In Tables~\ref{table:cub} and ~\ref{table:celeba} we report the per-group accuracy on CUB and CelebA for JTT (retrained) and JM1 with the same setting. We see that not only JM1 does improve worst-group accuracy, but improves accuracy on both minority groups in the dataset with almost no performance loss for the majority groups compared to JTT. 
Overall the results in Table~\ref{table:cub_sota}, Table~\ref{table:cub} and Table~\ref{table:celeba} are a clear indication that JM1 is competitive with JTT, and can be used to deal with worst-group performance in an effective manner.
\begin{table*}[ht]
\small
	\begin{center}
		\caption{\textbf{Validation-set group annotation}. 
		Results on CUB, CelebA and MultiNLI datasets for average accuracy and worst-group accuracy.
			JM1 performs comparably or better than the SOTA in worst-group performance without group annotation on the train-set, closing the gap with GroupDRO that requires full group annotations on the train set.
			\textbf{1st} / \underline{2nd} best worst-group accuracies \textbf{bolded} / \underline{underlined}.
			Our method is better or comparable with sota with the advantage of being computationally inexpensive, used with different level of annotation, different data modalities, without introducing regularization or additional complexity.}
		\begin{tabular}{l c c c c c c c} 
			\toprule
			& \multicolumn{2}{c}{CUB} & \multicolumn{2}{c}{CelebA} & \multicolumn{2}{c}{MultiNLI} & group \\
			\vspace{2pt}
			& avg acc & \underline{worst acc} & avg acc & \underline{worst acc} & avg acc & \underline{worst acc} & labels \\
			ERM~\citep{liu2021just}                        &  97.6   & 72.6 & 95.6 & 47.2  & 82.4 & 67.9 & \xmark \\
			CVaR-DRO~\citep{levy2020large}         & 96.5    & 69.5 &82.4  & 64.4 &82.0  &68.0 & \xmark \\
			LfF~\citep{nam2020learning}                & 97.3    & 75.2 & 86.0 & 70.6  & 80.8 & \underline{70.2} & \xmark \\
			EIIL~\citep{creager2021environment}    & 96.9    & 78.6 & 85.7 & 81.7&-&- & \xmark \\
			\vspace{2pt}
			JTT~\citep{liu2021just}                         & 93.3    & \underline{86.7} &  89.1 & 82.8  &78.6   &\textbf{72.6} & \xmark \\
			CNC~\citep{zhang2022contrastive} & 90.9 & \textbf{88.4} & 89.9 & \textbf{88.8} & - & - & \xmark \\
            \textbf{JM1 (ours)} & 93.1 & \textbf{88.5}  & 87.9 & \underline{84.4} & 80.3& \textbf{72.5} & \xmark \\
			\midrule
			GroupDRO~\citep{sagawa2019distributionally}& 93.5& 91.4& 92.9& 88.9& 81.4& 77.7& \cmark \\
			\bottomrule
		\end{tabular}
	\label{table:cub_sota}
	\end{center}
\end{table*}

\subsection{JM1 Ablations}
We compare the proposed JM1 with standard (unconditional) MixUp~\citep{zhang2017mixup} and a class-conditional MixUp baseline, where we mix samples from random groups.
Note that the class-conditional MixUp baseline mixes samples from \textit{any} two different groups conditional on the same class, while JM1 only mixes samples from a majority and a minority group conditional on the same class.
\begin{figure}[ht]
		\begin{center}
		\includegraphics[width=0.7\linewidth]{./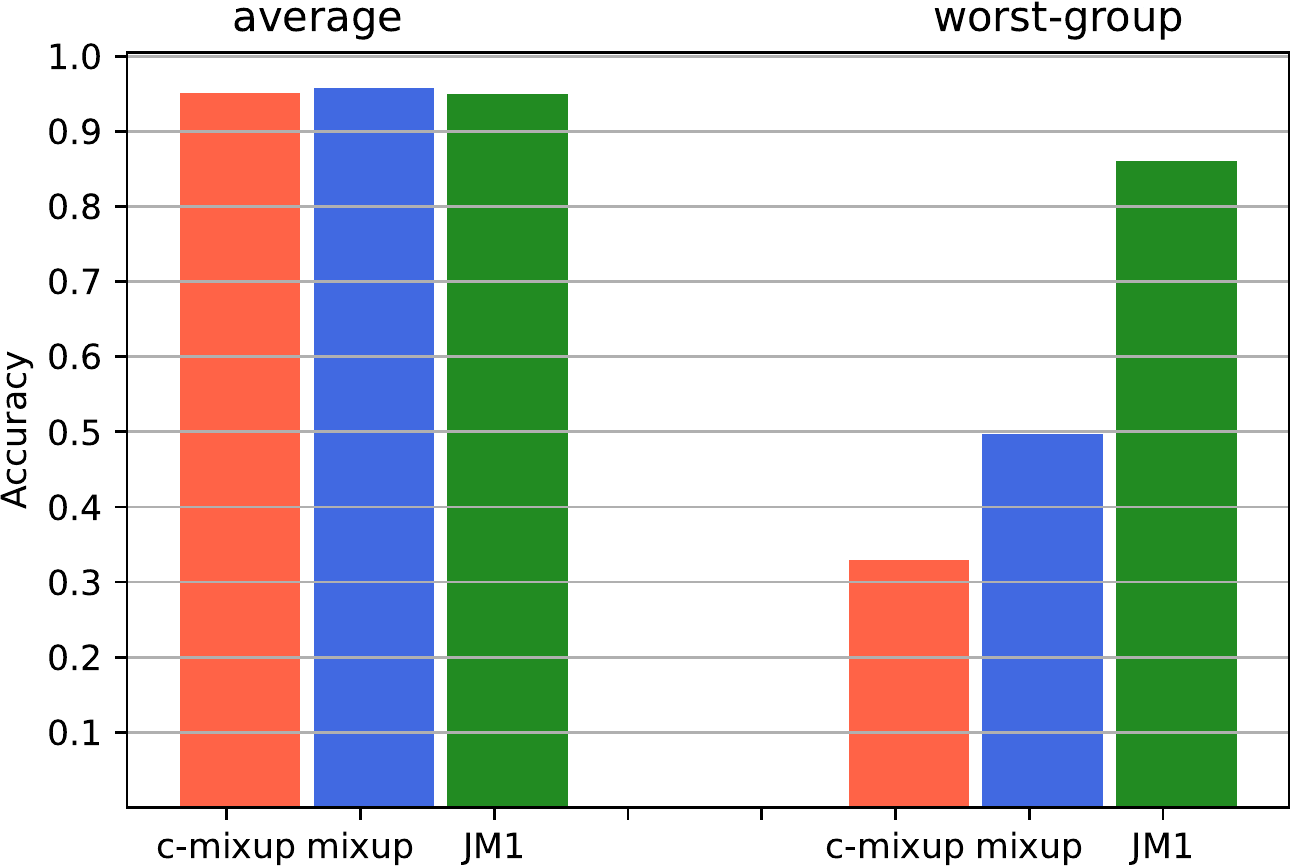}
		\caption{\textbf{MixUp Ablation}. 
		Results on CUB dataset JM1 vs class-conditional (c-MixUp) and standard MixUp baselines. 
		Note that standard MixUp fails to improve worst-group accuracy.}
			\label{bar:oracle}
		\end{center}
\end{figure}
Fig.~\ref{bar:oracle} shows that, while standard MixUp fails to improve worst-group performance, the proposed mixing strategy performs significantly better than both MixUp baselines by a large margin, despite that all three methods perform very similarly in terms of average performance.
In Fig.~\ref{figure:gap} we investigate the performance of JM1 with and without the class-conditional constraint. 
We see that empirically, conditional JM1 performs better than unconditional JM1.
\begin{figure}[ht]
		\begin{center}
		\includegraphics[width=0.7\linewidth]{./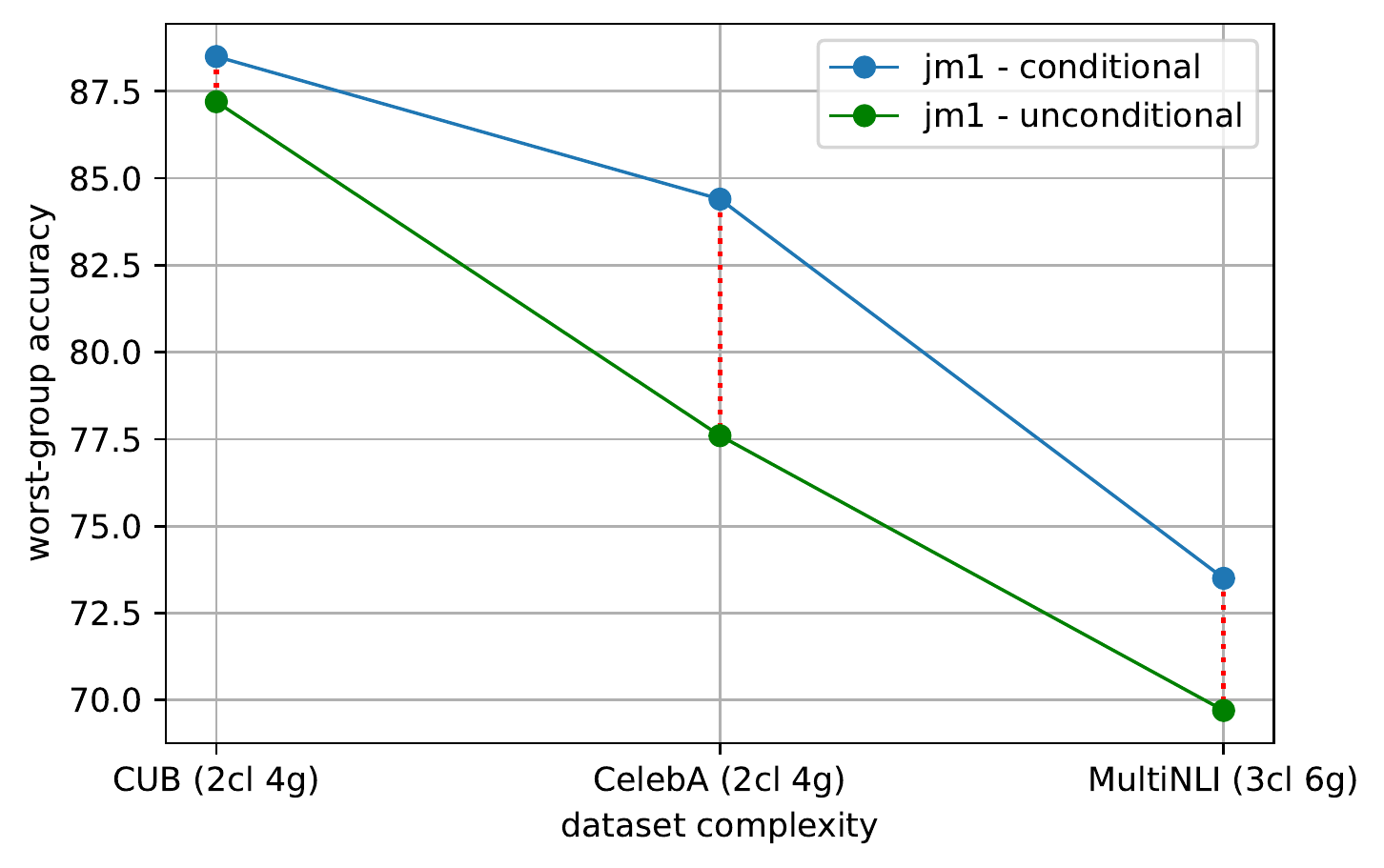}
		\caption{\textbf{JM1 Ablation}. 
		Unconditional vs conditional JM1 on  the three benchmark datasets (CUB, CelebA, MultiNLI) sorted by complexity. 
		The worst-group performance decreases as the dataset complexity (dataset size, number of groups, variety of the data) increases.
		}
		\label{figure:gap}
	\end{center}
\end{figure}
The worst-group accuracy decreases as the dataset complexity (in terms of the number of samples, classes, and groups) increases for both models.
\section{Related Work}	

\paragraph{Robust classification.}
Optimizing average accuracy with overparametrized neural networks tends to get rid of minority group information. 
This fact is related to how the model is trained and the way the model exploits superficial correlations in the data~\citep{geirhos2018imagenet}.
Recently approaches to preserve group information have been proposed, leveraging robust optimization~\citep{sagawa2019distributionally, mohri2019agnostic, zhang2020coping}, data augmentation~\citep{goel2020model}, debiasing approaches~\citep{hong2021unbiased, bahng2020learning}, and group reweighting~\citep{shimodaira2000improving, byrd2019effect, sagawa2020investigation} using importance samples to rebalance minority and majority groups. 
If full group annotation is provided, robust optimization methods are employed. The supervised scenario is closely related to the fairness one~\citep{hardt2016equality, agarwal2018reductions, oneto2020fairness, pleiss2017fairness, woodworth2017learning}: wherewith distribution shift we want to improve the worst-group generalization, in fairness we want to learn features and distributions that are invariant to different groups, using explicit constraints~\citep{zafar2017fairness, zafar2019fairness, zemel2013learning}, invariant learning~\citep{arjovsky2019invariant}, auxiliary tasks~\citep{edwards2015censoring, lahoti2020fairness}, and causal structure~\citep{oneto2020fairness}.
In the causality literature~\citep{pearl2009causality} the group information is interpreted as a confounder. 
If we have access to the confounder (group annotation), we can control for it.
Recently risk extrapolation~\citep{krueger2021out} has been proposed to deal with domain shift in the presence of confounders.
Similarly, JM1 augments the training data with plausible scenarios not present in the data. A similar approach has been proposed in~\citep{chuang2021fair} in the fairness domain and in~\citep{yao2022improving} for group robustness with full group annotation.

\paragraph{Groups generalization without annotation.}
All the aforementioned methods assume that fine-grained group annotation is available at train time.
However, assuming subpopulation annotation in large diverse datasets is problematic because the process is time consuming and prone to errors. 
Learning from Failure~\citep{nam2020learning} proposes a group annotation-free and domain agnostic procedure to improve worst-group performance. The authors propose a self-supervised approach to identify bias-aligned (majority) vs. bias-contrastive (minority) groups in the data.
The self-supervised signal consists of the per-sample loss magnitude in the early stage of training;
Then the dataset is re-weighted based on the loss magnitude and the procedure applied multiple times during training. Similar ideas are explored in~\citep{yaghoobzadeh2019increasing, utama2020towards}.
Just Train Twice~\citep{liu2021just} proposes a similar solution, in which the identification phase is performed one time and then a reweighting phase is trained. %
\citep{sohoni2020no} uses clustering to improve the classification of underrepresented classes. 
Distributional robust optimization~\citep{ben2013robust, duchi2019distributionally, duchi2021statistics}, where the optimization problem is defined over a set of distributions around the empirical distribution, has also been widely used to improve group robustness. Invariant Learning~\citep{arjovsky2019invariant, creager2021environment} is another approach to improve worst-group performance. The goal is to obtain a learner that performs well in different environments, where in each environment we have the same classes but different confounders. Concurrent to our work, SelecMix~\citep{hwang2022selecmix} proposes to identify groups using contrastive learning and then use mixup similarly to the second phase of JM1. However, the method is applied only to image data, in one specific setting (validation set group annotation), and relies on distance between samples, where JM1 does not make assumptions on the data modality or the specific setting.

\section{Conclusion}
We present JM1, a simple and effective method to improve worst-group performance when different level of annotation are available. 
We show that our approach is competitive with sota and robust to the choice of identification phase. Class-conditional group interpolation provides a cheap, domain-agnostic way to augment the training distribution and improve worst-group accuracy.
However, our approach does not guarantee fairness.
We cannot use JM1 when there is perfect alignment between confounders and labels. 
We consider the data-driven problem without accounting for the collection, labeling, and source of the data. 
The results are sensitive to the sampling distribution for $\alpha$. 
More robust interpolations mechanisms will be explored in future work.

\clearpage
\bibliographystyle{apalike}
\bibliography{biblio.bib}

\clearpage
\onecolumn
\appendix
\section{Preliminary Results without Group Annotation}
While JTT and JM1 do not use full group annotation on the train set, they both rely on group annotation on a small validation set to guide group identification (i.e., choosing early stopping epoch). 
A natural question to investigate is if we can remove the need for group annotation completely, and JM1 can still perform reasonably well.
To that end, we test the robustness of JM1 to different identification epochs/sets in phase I) using the CUB dataset.
Specifically, we checkpoint three different identification scenarios (20, 40, 60 epochs) in phase I) for both JM1 and JTT.
\begin{figure}[ht]
	\centering
	\includegraphics[width=.45\textwidth]{./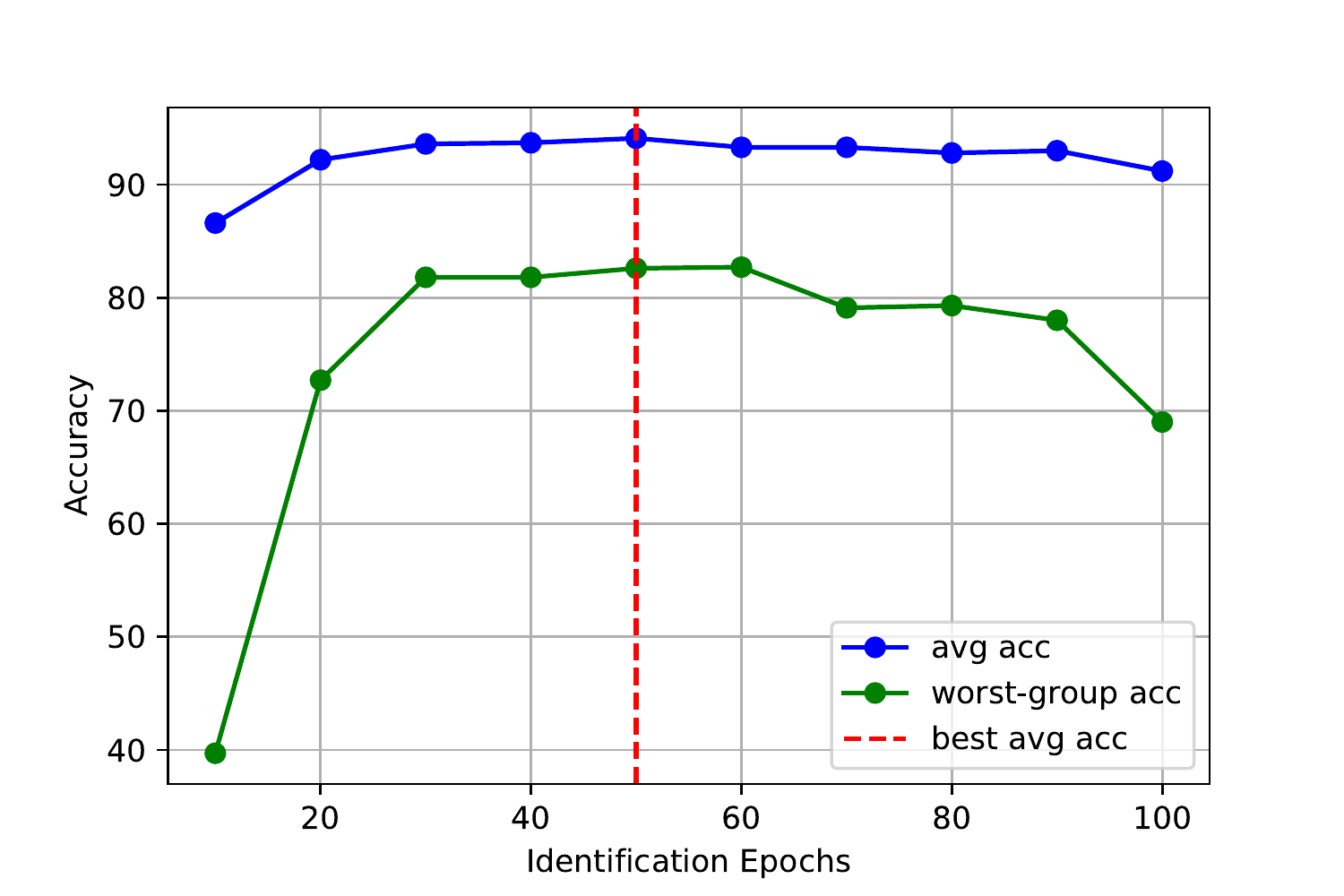}
	\includegraphics[width=.45\textwidth]{./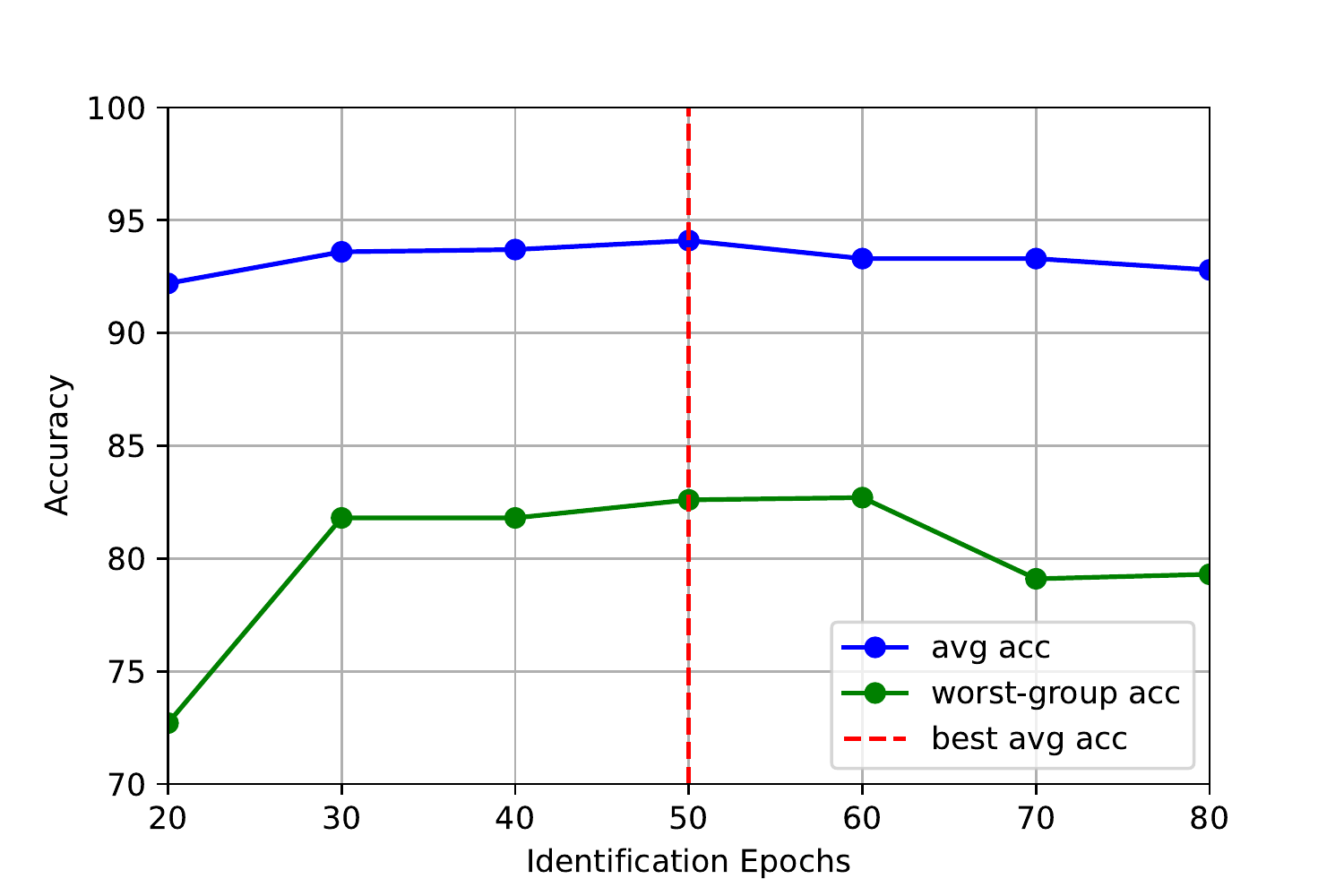}
	\caption{Robustness to identification epoch on CUB dataset without group annotation on the validation
		set. We select the best models using average accuracy for JM1 and worst-group accuracy for JTT.
	}
\end{figure}
Then we select for JM1 the best phase II) model in terms of average accuracy, while we tune the best-performing JTT using group annotation on a validation set in terms of worst-group accuracy.
Note that JM1 is now completely free from group annotations.
Table~\ref{table:cub_no_val} shows that JM1 without group annotation at all loses only around $5\%$ compared to the full JTT with access to group annotation on a validation set.
These are preliminary results that indicate the potential of training group-robust models in a fully self-supervised manner.
We can hope to achieve worst-group robust generalization with no more data requirement than what we need to train typical ML models.
\begin{table}[ht]
	\begin{center}
		\caption{Results on CUB dataset for average accuracy and worst-group accuracy without validation set.
			We checkpoint three identification epochs in the phase I (20, 40, 60 epochs for identification).
			We select JM1 using average accuracy, and tune JTT using worst-group accuracy.
			JM1 handles reasonably well misspecification of the identified error set.
			(S: number of error samples / possibly spurious correlations detected. 
			NS: number of not spurious correlations detected.
			P: precision. R: recall.)}
		\begin{tabular}{l c c c c |  c c | c c c c} 
			\toprule
			& \multicolumn{2}{c}{JM1} & \multicolumn{2}{c}{JTT-best} & \multicolumn{2}{c}{$\Delta \%$} & & & &\\
			\vspace{2pt}
			& avg & worst& avg & worst &  avg & worst & S & NS & P & R\\
			20id             &  92.3  & {72.9} & 78.9& 67.5 & + 17.0& \textcolor{olive}{+ 8.0}
			& 107&290&0.27&0.44\\
			40id             & 93.6   & {83.0} & 90.0& 86.9 & + 4.0 & \textcolor{purple}{- 4.5} 
			& 115&153&0.43&0.48\\
			60id             & 93.2& {82.2} & 90.3& {86.7} & + 3.2 & \textcolor{purple}{- 5.2}
			& 110&128&0.46&0.46\\
			\bottomrule
		\end{tabular}
		\label{table:cub_no_val}
	\end{center}
\end{table}

\begin{figure}[th]
	\centering
	\begin{minipage}{\columnwidth}
		\centering
		\includegraphics[width=0.45\columnwidth]{./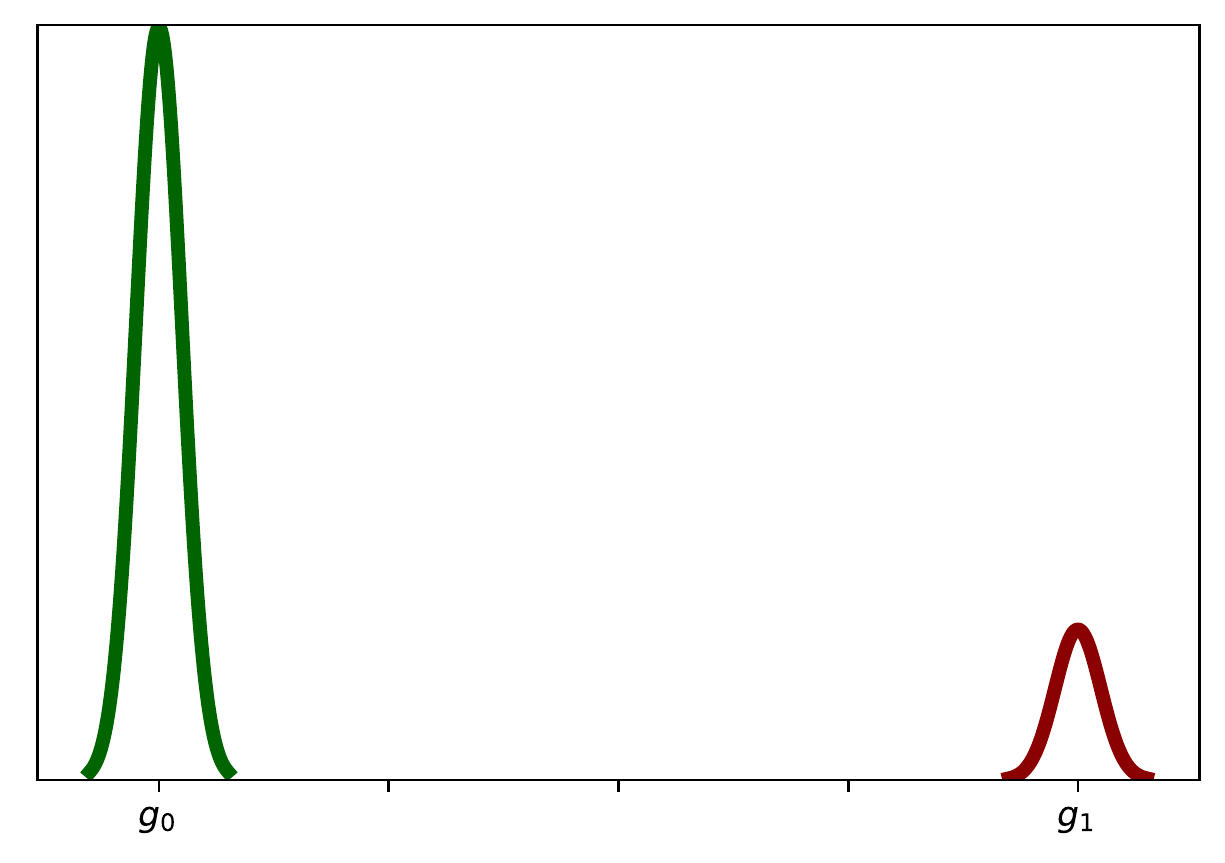}
		\includegraphics[width=0.45\columnwidth]{./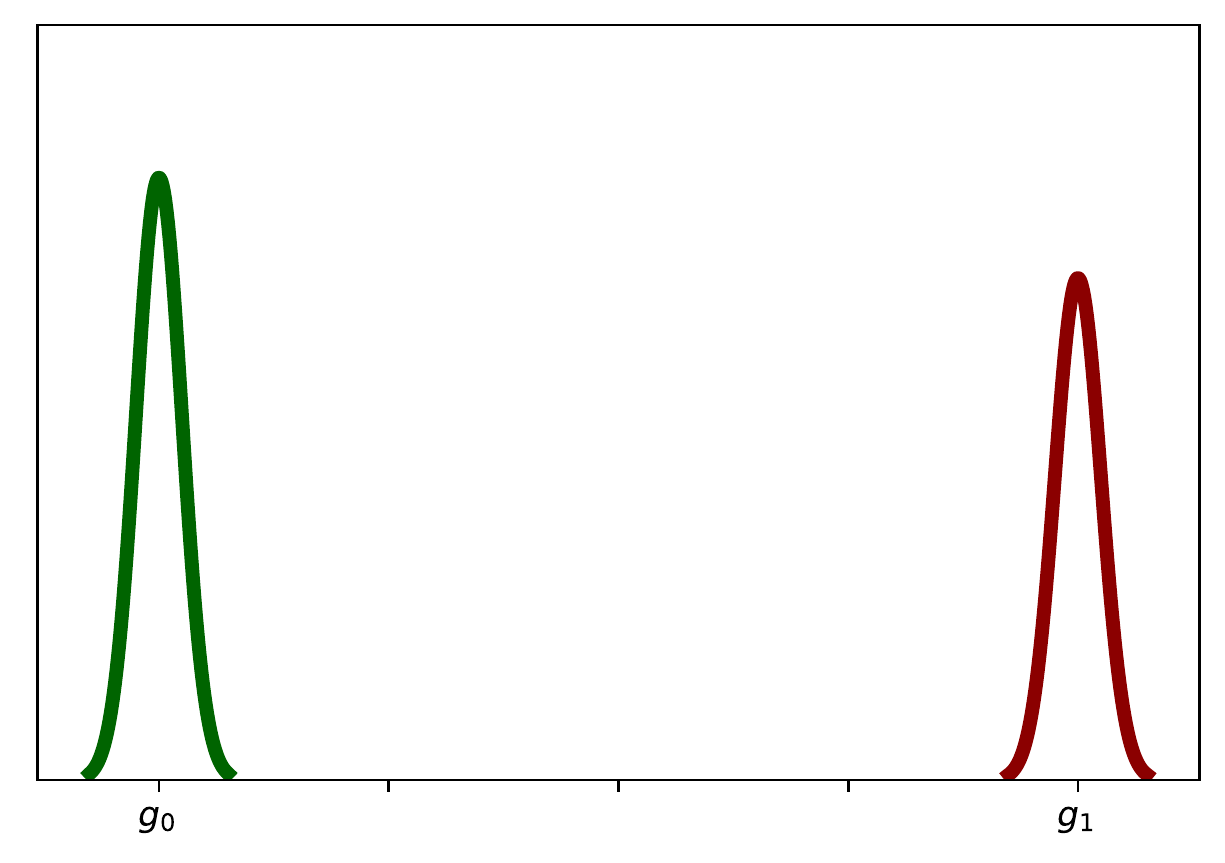}
		\caption{Train and test distro.}
		\label{fig:normal_train_test_distro}
	\end{minipage}%
\end{figure}

\begin{figure}[th]
	\centering
	\begin{minipage}{\columnwidth}
		\centering
		\includegraphics[width=0.45\columnwidth]{./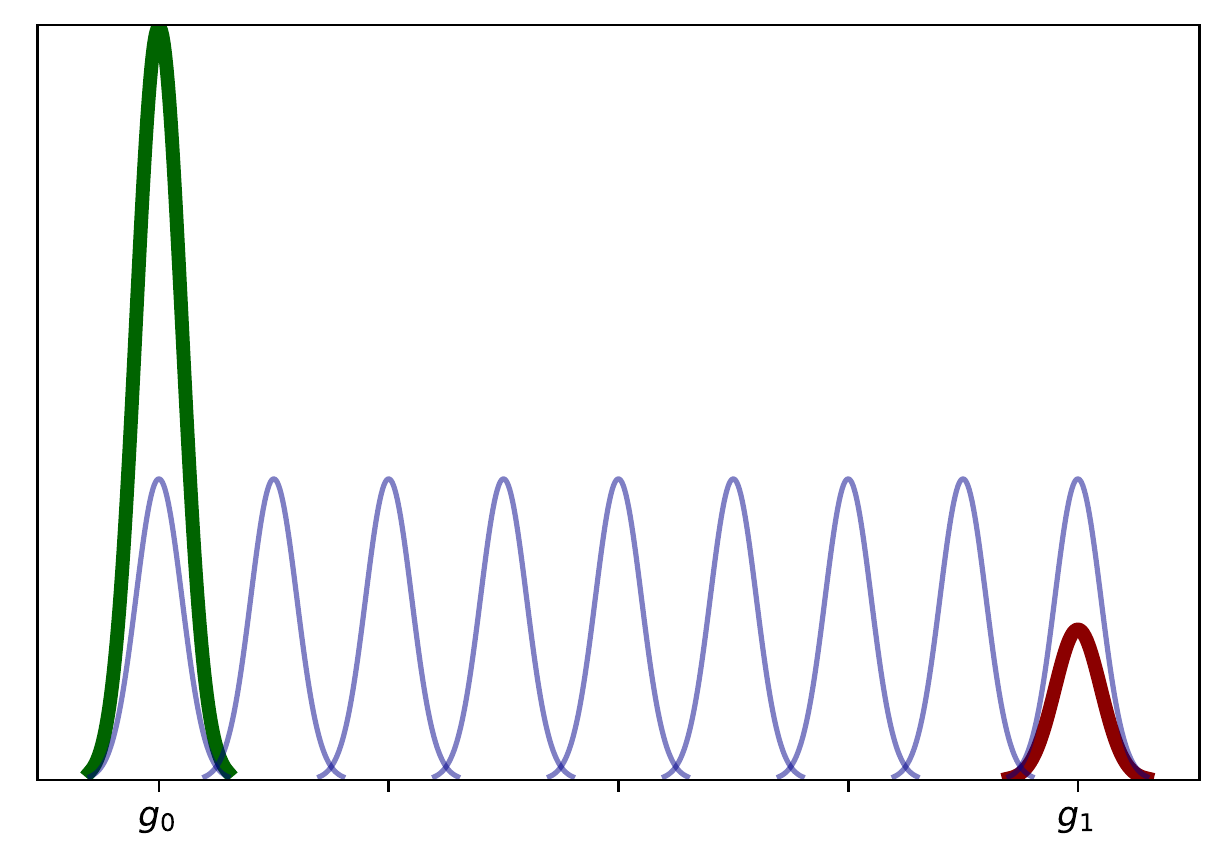}
		\includegraphics[width=0.45\columnwidth]{./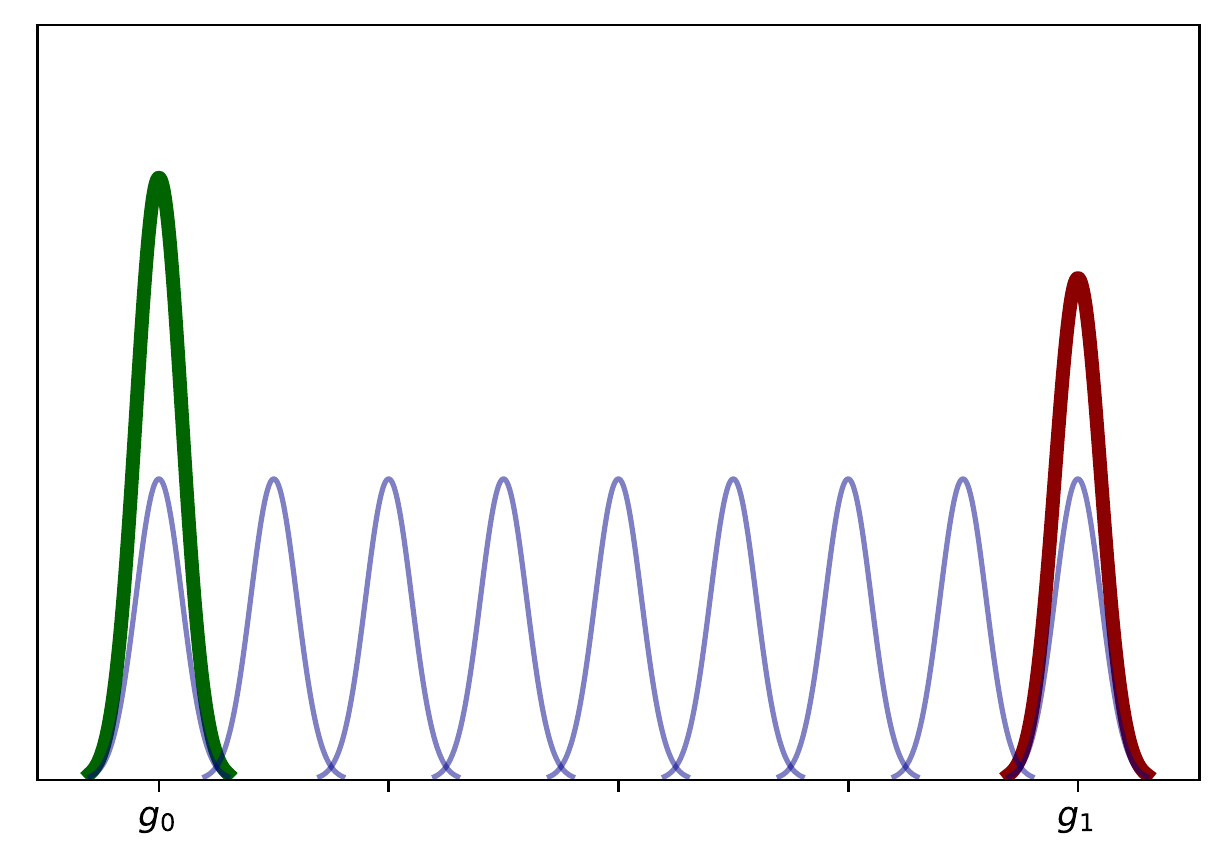}
		\caption{Augmented distro.}
		\label{fig:augmented_train_test_distro}
	\end{minipage}
\end{figure}

\section{Additional Ablation Experiments}
\label{appendix:experiments}
In this section, we report additional ablation experiments.
In Table~\ref{table:layer} we consider the performance of JM1 mixing samples at different levels of abstraction, where the layer indicates mixing in input space, inside the encoder, in output space, or choosing a layer at random. We also test different sampling distributions.
In Table~\ref{table:group-jm1-ablation} we report a similar ablation for the JM1 variant with full group annotation on the train set.
In Figure~\ref{fig:cub-identification-fig} and Table~\ref{table:cub-identification-table} we study the behaviour of the models varying the number of identification epochs in phase I).

	\begin{table}[h]
		\begin{center}
			\caption{Ablation on CUB dataset split by group. The goal is to classify land birds $y_l$ vs water birds $y_w$ in the presence of a confounders: 
			land background $c_l$ vs water background $c_w$.
			JM1+ is a variant of JM1 with additional oversampling.
			U: uniform distribution $U(0, 1)$. 
		    B: beta distribution $B(2, 5)$.}
			\begin{tabular}{l c c c c c c c } 
				\toprule
				\vspace{2pt}
				& layer & $\alpha$ & ($y_l$, $c_l$) & ($y_l$, $c_w$) &  ($y_w$, $c_l$) & ($y_w$, $c_w$) & worst acc \\
				JTT  & - & - & 94.3 & 86.7 & 87.5 & 91.6 & 86.7 \\
				\midrule
				JM1 & input    & $U$         & 98.4 & 84.2 & 77.7  & 93.0  & 77.7 \\
				JM1 & input    & $U(0.5)$ & 96.7 & 84.8 & 86.8  & 93.5  & 84.8 \\
				JM1 & early     & $U$        & 97.5 & 84.7 & 80.1   & 92.2  & 80.1\\
				JM1 & output  & $U$        & 97.3 & 85.9 & 84.7  & 92.7   & 84.7\\
				JM1 &random  & $U$       & 93.8 & 87.8 & 89.7   & 90.8   &87.8\\
				\underline{JM1} & random & $U/B$    & 94.2 & 88.5 &88.9   & 90.5   & \underline{88.5}\\
				\midrule
				JM1+ & input     & $U$   & 91.9    & 89.4   & 90.5   & 84.9 &  84.9\\
				JM1+ & input     & $U/B$ & {96.3} & {87.6} & {88.8} & {93.3} & 87.6\\
				JM1+ &early       & $U$    & 92.5   & 87.9    & 87.5   & 89.9  & 87.5 \\
				\textbf{JM1+} & random & $U/B$  & 94.0  & 89.6    & 90.7   & 90.2 & \textbf{89.6}\\
				\bottomrule
			\end{tabular}
			\label{table:layer}
		\end{center}
	\end{table}

\begin{table}[ht]
	\begin{center}
		\caption{Ablation on CUB dataset for GroupDRO and JM1 variants with different $\alpha$ schedule for the mixing procedure.
		U: uniform distribution $U(0, 1)$. 
		B: beta distribution $B(2, 5)$.}
		\begin{tabular}{l c c c c c} 
			\toprule
			& \multicolumn{2}{c}{CUB} &$\mathcal{L}$ &$\alpha$&mode\\
			\vspace{2pt}
			& avg & \underline{worst}     &&&\\
			GroupDRO   &96.6& 84.6& DRO & - &base\\
			\textbf{GroupDRO}   &93.5& \textbf{91.4}& DRO & - &adj\\
			\midrule
			GroupJM1  &90.3& 89.3& DRO&$B$&adj\\
			GroupJM1  &90.5& 89.9& DRO&$U/B$&adj\\
			
			\textbf{GroupJM1}  &92.2& \textbf{91.3}& DRO&$U$&adj\\
			\midrule
			JM1  &94.2& 86.6& ERM&$U/B$&-\\
			JM1  &94.4& 86.7& ERM&$B$&-\\
			\underline{JM1}  &90.9& \underline{90.5} & ERM&$U$& - \\
			\bottomrule
		\end{tabular}
		\label{table:group-jm1-ablation}
	\end{center}
\end{table}

\begin{figure}[th]
	\begin{minipage}{.5\textwidth}
	\centering
	\includegraphics[width=\textwidth]{./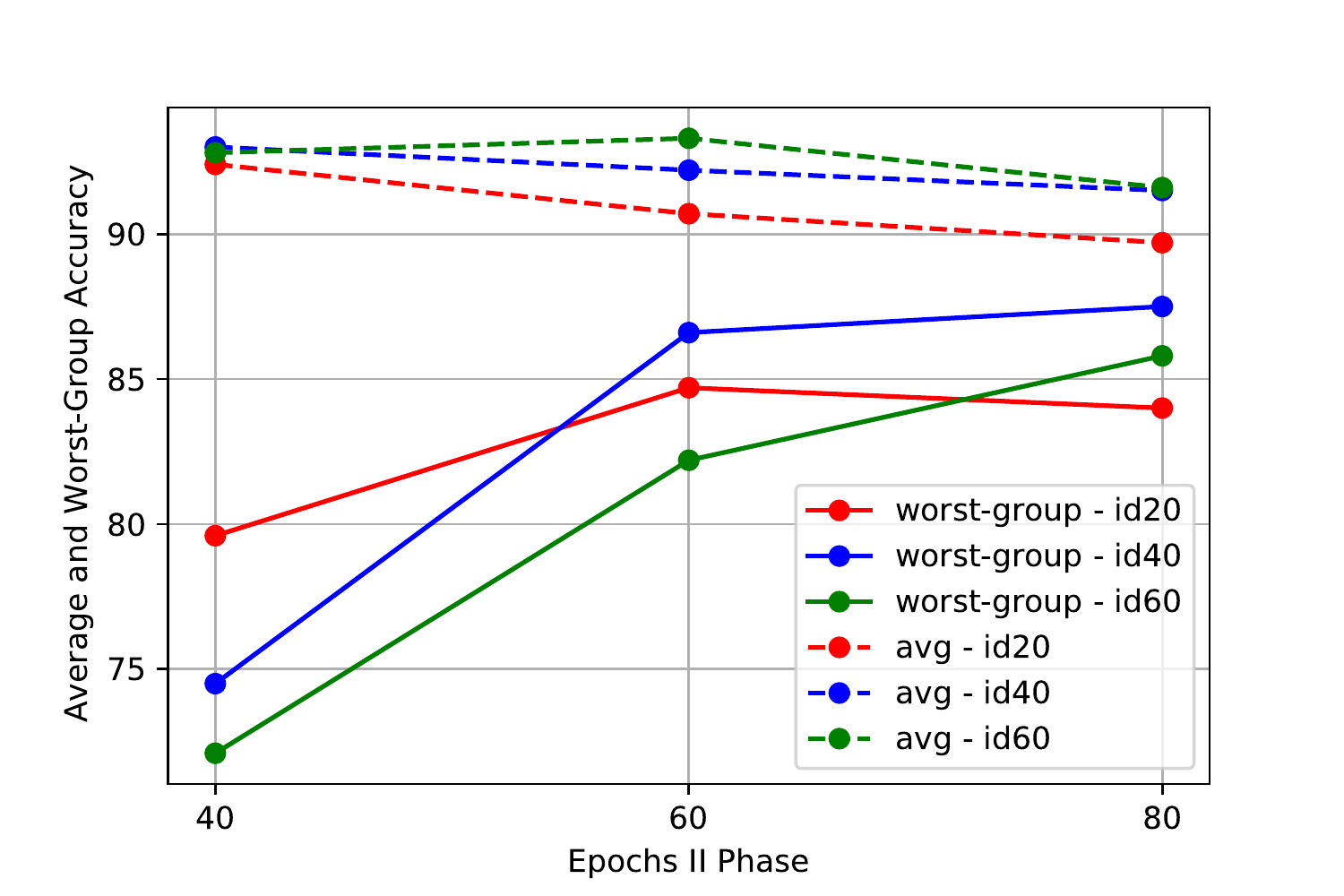}
	\caption{\small Worst-group and average accuracy for different setups.
			On the x-axis we have different epochs we evaluate the model performance in the II phase. 
			Each line represents a different epoch to identify samples in the I phase.
	}
	\label{fig:cub-identification-fig}
	\end{minipage}
	\begin{minipage}{.5\textwidth}
		\begin{center}
			\captionof{table}{\small Results on CUB for different identification epochs. For JM1-avg we choose the model based on average accuracy.}
			\begin{tabular}{l c c | c c} 
				\toprule
				& \multicolumn{2}{c}{JM1-avg} & \multicolumn{2}{c}{JM1-robust}  \\
				& avg & worst & avg & worst\\
				10id &86.6 &39.7 &80.6 &64.2 \\
				20id & 92.2 &72.7 &89.8 &84.4\\
				30id & 93.6 &81.8 &90.8 &86.4\\
				40id &93.7 &81.8 &90.3 &87.5\\
				50id &94.1 &82.6 &90.8 &87.5\\
				60id &93.3 &82.7 &91.1 &87.0\\
				70id &93.3 &79.1 &90.2 &85.8\\
				80id &92.8 &79.3 &90.2 &85.0\\
				90id &93.0 &78.0 &89.9 &84.7\\
				100id &91.2 &69.0 &86.3 &81.3\\
				\bottomrule
			\end{tabular}
			\label{table:cub-identification-table}
		\end{center}
	\end{minipage}
\end{figure}

\begin{table}[!ht]
	\begin{center}
		\caption{Per-group accuracy on MultiNLI. 
			The goal is to classify pair of sentences as entailment $\vy_e$, contradiction $\vy_c$ and neutral $\vy_n$ in the presence of a confounder: negation $\vc_{neg}$ and no negation $\vc_{nneg}$.}
		\begin{tabular}{l c c c c c c c c} 
			\toprule
			& ($\vy_{c}$, $\vc_{nneg}$) & ($\vy_c$, $\vc_{neg}$) &  ($\vy_{e}$, $\vc_{nneg}$) & ($\vy_e$, $\vc_{neg}$) & ($\vy_n$, $\vc_{nneg}$) & ($\vy_n$, $\vc_{neg}$) & avg acc & worst acc \\
			\\
			JTT   & 80.3 & 93.1& 81.0& 73.6& 78.4& 70.6& 80.5& 70.6\\
			\textbf{JM1} & 79.3& 91.1& 81.6& 73.6& 78.5& 73.5& 80.3&\textbf{73.5}\\
			\bottomrule
		\end{tabular}
		\label{table:multinli_appx}
	\end{center}
\end{table}

\clearpage
\section{Experimental Details}
\label{appendix:details}
\begin{table}[ht]
	\begin{center}
		\caption{Training details for JM1.
		CE: cross-entropy. 
		U: uniform distribution $U(0, 1)$. 
		B: beta distribution $Beta(2, 5)$.
		w/o I: without using phase I for cluster identification.
			For all the other hyper-parameters we use the one proposed in~\citep{liu2021just}.
		}
		\begin{tabular}{l c c c  c c} 
			\toprule
			& CUB & CelebA & MultiNLI & CUB (w/o I) & CelebA (w/o I)\\
			&&&&&\\
			Batch Size &32&64&32&32&64\\
			Identification Epoch & 40& 1&2 &- &-\\
			Early stopping &\cmark &\cmark &\cmark  &\cmark&\cmark\\
			Epochs (I Phase)  & 200 &40 &4   & -    & -\\
			Epochs (II Phase) & 200 &40 &4 & 200& 80\\
			$\alpha$ &U/B & U/B & U & U/B& U/B\\
			Learning Rate &1e-5&1e-5&1e-5&1e-5&1e-5\\
			$\mathcal{L}$ &CE&CE&CE&CE&CE\\
			Number Groups &4&4&6 &4&4\\
			Number Classes &2&2&3 &2&2\\
			Mixing Layer &random& early layer& output& random& early layer\\
			Pretrained Encoder &ResNet50&ResNet50&BERT&ResNet50&ResNet50\\
			Weight Decay &1.0&0.1&0.1 &1.0&0.1\\
			\bottomrule
		\end{tabular}
	\end{center}
	\label{table:details}
\end{table}

\end{document}